\def\eqref#1{equation~\ref{#1}}
\def\1{\bm{1}}
\DeclareMathAlphabet{\mathsfit}{\encodingdefault}{\sfdefault}{m}{sl}
\SetMathAlphabet{\mathsfit}{bold}{\encodingdefault}{\sfdefault}{bx}{n}
\newcommand{\N}{\ensuremath{\mathcal{N}}}
\newcommand{\slr}{$SL(2, \mathbb{R})$}
\title{Learning polynomial problems with $SL(2, \mathbb{R})$-Equivariance}
\author{Hannah Lawrence\thanks{Equal contribution, author order determined by random coin toss} \\ \href{hanlaw@mit.edu}{hanlaw@mit.edu}  \and Mitchell Tong Harris\footnotemark[1] \\ \href{mitchh@mit.edu}{mitchh@mit.edu}}
\begin{document}

\maketitle

\makeatletter
\def\blfootnote{\gdef\@thefnmark{}\@footnotetext}
\makeatother
\blfootnote{The authors are with the Massachusetts Institute of Technology, Cambridge MA 02139. The authors were supported by the National Science Foundation Graduate Research Fellowship under Grant Numbers 1745302 and 2141064, respectively. HL is also supported by the Fannie and John Hertz Foundation.}

\begin{abstract}
Optimizing and certifying the positivity of polynomials are fundamental primitives across mathematics and engineering applications, from dynamical systems to operations research. However, solving these problems in practice requires large semidefinite programs, with poor scaling in dimension and degree. In this work, we demonstrate for the first time that neural networks can effectively solve such problems in a data-driven fashion, achieving tenfold speedups while retaining high accuracy. Moreover, we observe that these polynomial learning problems are equivariant to the non-compact group $SL(2,\mathbb{R})$, which consists of area-preserving linear transformations. We therefore adapt our learning pipelines to accommodate this structure, including data augmentation, a new $SL(2,\mathbb{R})$-equivariant architecture, and an architecture equivariant with respect to its maximal compact subgroup, $SO(2, \mathbb{R})$. Surprisingly, the most successful approaches in practice do not enforce equivariance to the entire group, which we prove arises from an unusual lack of architecture universality for $SL(2,\mathbb{R})$ in particular. A consequence of this result, which is of independent interest, is that there exists an equivariant function for which there is no sequence of equivariant polynomials multiplied by arbitrary invariants that approximates the original function. This is a rare example of a symmetric problem where data augmentation outperforms a fully equivariant architecture, and provides interesting lessons in both theory and practice for other problems with non-compact symmetries. 
\end{abstract}

\section{Introduction}\label{sec:introduction}

Machine learning has emerged as a powerful tool for accelerating classical but time-consuming algorithms in scientific domains. 
In PDE time-stepping and molecular dynamics, for instance, productive lines of work have arisen around learning to forward-step in time or regress unknown parameters \citep{batzner2022e3, han2018solving}. These problems are solvable by existing methods, but can be significantly sped-up by training a neural net on a dataset of interest. 
Machine learning is especially valuable for problems for which, once a solution has been found, its correctness can be verified efficiently; this is the case for e.g. modeling catalysts, where a candidate output conformation can be confirmed with a few DFT steps \citep{chanussot2021opencatalyst}.

Inspired by this line of research, we hypothesize that machine learning may allow for learning from specialized datasets of \emph{polynomials}. Polynomials are ubiquitous in mathematics and engineering, and primitives such as minimizing a polynomial 
\citep{parrilo2003minimizing,jiang2013polynomial,passy1967generalized,nie2013polynomial,shor1998nondifferentiable} or certifying its positivity \citep{lasserre2001global,prestel2013positive, parrilo2000structured}, have attracted attention due to their direct applications to diverse problems ranging from operations research to control theory \citep{henrion2005positive, tedrake2010lqr}. Examples include certifying the stability of a dynamical system with a Lyapunov function, robot path planning, and designing nonlinear controllers \citep{ahmadi2016some}. Although there exist classical methods \citep{aps2022mosek} for solving these problems, they are often computationally expensive.

A particularly attractive polynomial task for machine learning, which we discuss in further detail in Section~\ref{sec:task-maxlogdet}, is predicting a matrix certificate of nonnegativity. This certificate can be efficiently checked independently of the neural network, and if the matrix is positive semidefinite (PSD), it \textit{guarantees} that the original polynomial is nonnegative. Thus, one does not need to trust the trained network's accuracy; the utility of a certificate, once produced by any means, can be verified efficiently.

A trained neural network could not hope to outperform an existing semidefinite programming (SDP)-based technique on all instances; rather, we assume that there is an application-specific \emph{distribution} of inputs (polynomials), and try to learn the shared attributes of such instances that may make a more efficient solution possible. This is similar to the assumption made when searching for sum of squares certificates for nonnegativity: in general, it is NP hard to certify that a polynomial is nonnegative \citep{murty1985some}, but one could still search for a sum of squares certificate of low degree via semidefinite programming, as we discuss in \Cref{sec:task-maxlogdet}. A simple example of a class of univariate polynomials with a learnable structure are those of the form $p_i(x) = s(x)^2 + c_i$ for some real-rooted polynomial $s(x)$ and constants $c_i$. If consistently present in a dataset, this sort of structure can be leveraged to quickly compute global optima. 
Instead of discovering such features by hand, we \emph{learn} them by training a neural solver to solve the problem --- ideally more quickly than off-the-shelf algorithms for minimization, positivity certification, and beyond. 

In other contexts of structured inputs, it is now relatively common practice to adapt neural architectures to the invariances of the input and output data types \citep{cohen2016gcnn}. For example, architectures for point clouds are usually agnostic to translation, rotation, and permutation \citep{thomas2019tfn, anderson2019cormorant}, often imparting improved sample efficiency and generalization \citep{elesedy2021provably}. How can we design learning pipelines which, much like those for graphs, sets, and manifolds, take into account the underlying symmetries of the associated polynomial learning tasks?
In particular, the majority of polynomial tasks transform predictably under any linear change of the polynomial variables. For example, as shown in Figure \ref{fig:equivariant-polynomial-demo}, the minimizer of a polynomial tranforms equivariantly with respect to linear transformations of the input variables, while the minimum value itself is invariant. 
Instead of working with the entire general linear group, we restrict our attention to the subgroup \slr. The special linear group \slr~consists of $2 \times 2$ matrices with determinant $1$, which are sometimes called ``area-preserving''. Although the group is non-compact, which entails many difficulties (e.g. there is no finite invariant measure over a non-compact group, precluding the ``lifting and convolving'' approach of \citet{finzi2020generalizing}; see also \Cref{subsec:distributional_view}), the irreducible representations and the Clebsch-Gordan decomposition (defined in Section \ref{sec:archdetails}) provide building blocks for an equivariant architecture, as demonstrated in \citet{bogatskiy2020lorentz}. In the case of \slr, they are both well-understood and naturally relate to polynomials. 
As a result, we can define an architecture that achieves \emph{exact} equivariance to area-preserving linear transformations --- subject only to numerical error from finite precision, and not to Monte Carlo integral approximations as in most previous work on Lie group symmetries \citep{finzi2020generalizing, macdonaldo2021enabling}. To the best of our knowledge, this is the \emph{only} known architectural technique to achieve exact \slr-equivariance. 

However, we make a surprising discovery: although this well-established method (taking tensor products and linear combinations of irreps) can approximate any equivariant \textit{polynomial}, it cannot approximate any equivariant \textit{function}. In other words, we prove that there exists an \slr-equivariant function for which there is no converging sequence of equivariant polynomials multiplied by arbitrary invariants 
(Corollary~\ref{cor:noequivariantweierstrass}), a result which may be of independent interest. Moreover, the function (\eqref{eqn:analytic-center}) is not strange -- in fact, it may be
the very function one wants to approximate in positivity verification applications. Because of this impossibility result, we empirically explore other equivariance-inspired techniques: data augmentation over well-conditioned subsets of \slr, as well as an equivariant architecture with respect to the even smaller subgroup, $SO(2,\mathbb{R})$, which obtain promising experimental results.

\begin{figure}[!t] 
    \centering
\begin{subfigure}{.2\textwidth}\includegraphics[width=\textwidth]{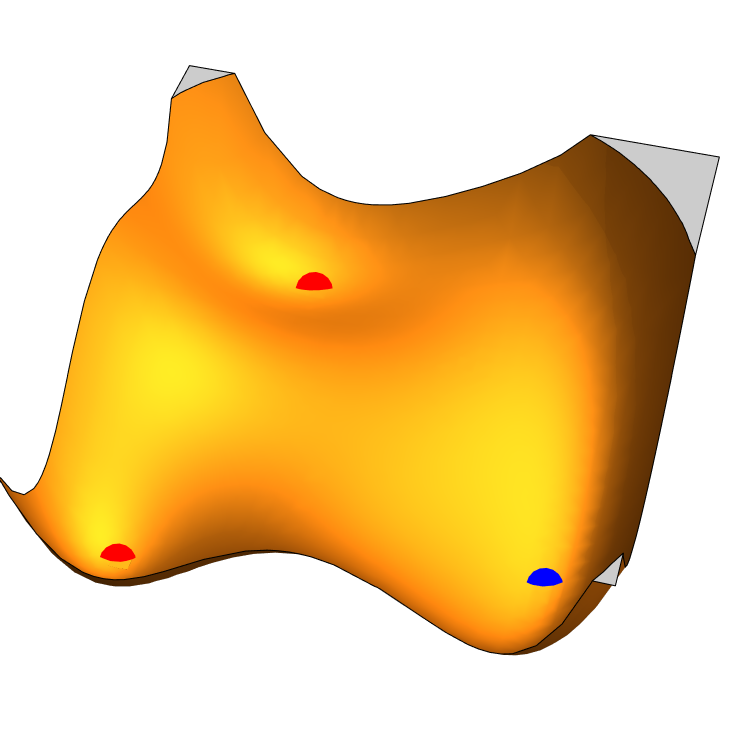}
    \caption{$g = \begin{pmatrix}1 & 0 \\ 0 & 1\end{pmatrix}$}
    \end{subfigure}
\hfill
\begin{subfigure}{.2\textwidth}\includegraphics[width=\textwidth]{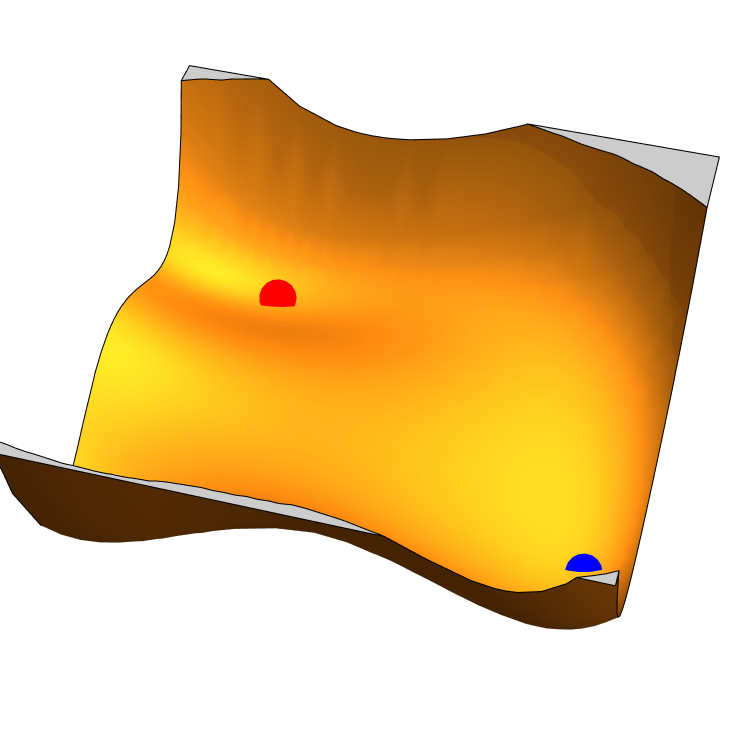}\caption{$g = \begin{pmatrix}{1.2} & 0 \\ 0 & \frac{1}{1.2}\end{pmatrix}$}
    \end{subfigure}
\hfill
    \begin{subfigure}{.2\textwidth}\includegraphics[width=\textwidth]{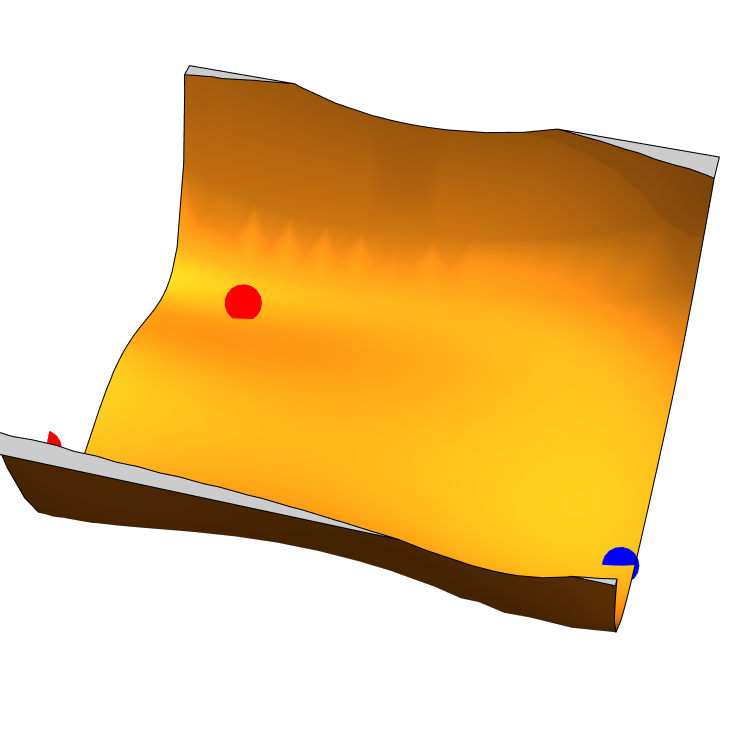}\caption{$g = \begin{pmatrix}{1.4} & 0 \\ 0 & \frac{1}{1.4}\end{pmatrix}$}
    \end{subfigure}
    \caption{\textbf{The action of \slr~ on polynomials.} Each plot is the result of applying some $g \in SL(2, \mathbb{R})$ to the degree $6$ bivariate polynomial $\left((x-3)^2+(y-3)^2\right) \left((x+2)^2+\frac{1}{4} (y-2)^2\right) \left(\frac{1}{2} (x-2)^2+(y+2)^2\right)-8 \left(x^2+y^2\right)$, stretching or compressing the polynomial along each coordinate axis.
    The local minima are marked in red, and the global minimum in blue. While the minimum value is invariant, the minimizer transforms equivariantly under \slr~.}\label{fig:equivariant-polynomial-demo}
\end{figure}

\subsection{Our Contributions}\label{subsec:contributions}
In this work, our main contributions are threefold. First, we apply machine learning methods to certain polynomial problems of interest to the optimization and control communities: positivity verification and minimization. To the best of our knowledge, this is the first application of machine learning to these problems. 
Second, we implement the first exactly \slr-equivariant architecture, which is universal over equivariant polynomials.
Third, and perhaps most surprisingly, we prove that there exists an \slr-equivariant function for which there is no converging sequence of \slr-equivariant polynomials. This result indicates that no tensor product-based architecture can attain universality over \slr-equivariant functions. We therefore compare the performance of other equivariance-inspired techniques that do not enforce strict equivariance to the entire symmetry group.  
By closely evaluating equivariance with respect to \slr~ on a concrete, richly-understood set of application problems, we provide a cautionary note on the incorporation of non-compact symmetries, and an invitation to lean more heavily on data augmentation in such cases. 

\subsection{Related Work}\label{sec:related_work}
\textbf{Equivariant Learning} Architectures enforcing equivariance to \emph{compact} Lie groups abound in both theory and practice, often focusing on $SO(3)$ as it arises widely in practice. To name just a few, \citet{cohen2018spherical} enforces spherical equivariance using real-space nonlinearities, while \citet{kondor2018clebsch} uses the tensor product nonlinearity. Several architectures designed for rotation-equivariant point cloud inputs also use tensor product nonlinearities \citep{thomas2019tfn, anderson2019cormorant}. Methods for more general Lie groups based on Monte Carlo approximations of convolution integrals include \citet{finzi2020generalizing} and \citet{macdonaldo2021enabling}, while \citet{finzi2021practical} presents a computational method for solving for an equivariant linear layer of an arbitrary Lie group. Most relevant to our architecture is \citet{bogatskiy2020lorentz}, who introduce a Lorentz-group equivariant architecture closely related to ours; we discuss the relationship in more detail in Section~\ref{sec:arch}, but note that our application area (and therefore the output layer), empirical findings, and universality properties are distinct.
Finally, \citet{gerken2022augmentation} provides one other empirical example where augmentation outperforms equivariance. However, they observe this phenomenon only for invariant tasks with respect to the compact group of rotations, whereas we work with a task equivariant to a non-compact group, and therefore hypothesize that the underlying mechanisms are quite different.

\textbf{Polynomial problems} Interest in polynomial problems is driven in part by the ubiquity of their applications.
One application is to find a Lyapunov function to certify stability of a system, which can be accomplished with a semidefinite programming (SDP) solver \citep{parrilo2000decomposition,parrilo2000structured}. SDP solvers are effective when the dimensions are not too large \citep{mittelmann2003independent}. 
A different method to prove polynomial nonnegativity on an interval is to write the polynomial in the Bernstein basis \citep{farouki2012bernstein, harris2023improved}, which has been used to solve problems of robust stability \citep{garloff2000application, garloff1997speeding,malan1992b,zettler1998robustness}. When the interval is not compact, however, the results are sensitive to the particular nonlinear map of the original interval to $[0,1]$.
For global polynomial minimization, a comparison of abstract approaches, including sum of squares-based methods, are given in \citet{parrilo2003minimizing}. Methods such as grid refinement and cutting plane algorithms can be used for many applications \citep{hettich1993semi}, but the result is not certifiably feasible. One such application is in designing filter banks for signal processing. Perfect reconstruction is a desirable property of a filter bank, and is known to be equivalent to nonnegativity of a univariate trigonometric 
polynomial \citep{kortanek1998semi, moulin1997role}.

\section{Preliminaries}
\label{sec:sl2equiv}
\textbf{Special Linear Group } Recall that a group $G$ is a set closed under an associative binary operation, for which an identity element $e$ and inverses exist. The group \slr~consists of $2\times 2$ real-valued matrices of determinant $1$. The condition number of a matrix $M$ is the ratio of its maximum to its minimum singluar values. (Note that \slr~contains arbitrarily poorly-conditioned matrices, as e.g. diagonal matrices with entries $a$ and $\frac{1}{a}$ have condition number $a^2$.) $G$ acts on a set $\mathcal{X}$ if for every $g,h \in G$ and $x \in \mathcal{X}$, $g(hx)=(gh)x$ and $ex=x$. In this case, the orbit of $x_0 \in \mathcal{X}$ is $\{gx_0 | g\in G\}$.
A homogeneous polynomial, or \emph{form}, is one in which every term has the same degree. We are concerned with homogeneous polynomials in two real variables, or \emph{binary forms}.
Let $V(d)$ denote the vector space of binary forms of degree $d$, which can be identified with $\mathbb{R}^{d+1}$. For an element $p \in V(d)$, we may write it as $p$ or $p(\vec{x})$. The action of $g \in SL(2,\mathbb{R}) \subset \mathbb{R}^{2 \times 2}$ on a binary form $p$ is defined in Section~\ref{sec:reptheorysl2} and shown in Figure \ref{fig:equivariant-polynomial-demo}. 

\textbf{Representation Theory } A \emph{representation} of a group $G$ is a vector space $V$ together with a map $\rho: G \to GL(V)$ satisfying $\rho(g_1)\rho(g_2) = \rho(g_1g_2)$ for all $g_1, g_2 \in G$. An \emph{irreducible representation}, or irrep, is a representation for which there does not exist a subspace $W \subseteq V$ satisfying $\rho(g)w \in W$ for all $g \in G, w \in W$. The irreps are crucial tools for understanding a group's other representations.
The tensor product of representations $(V_1, \rho_1)$ and $(V_2, \rho_2)$ is given by $\rho: G \to GL(V_1 \otimes V_2)$, $\rho(g) = \rho_1(g) \otimes \rho_2(g)$, and is itself a representation. 
A reducible representation $V_3$ may be decomposed into a direct sum of irreps $V_1$ and $V_2$. When $V_3$ is itself a tensor product of irreps, this is known as the Clebsch-Gordan decomposition. The relation $V_3 \simeq V_1 \oplus V_2$ means that there exists an invertible, equivariant, linear map $T:V_3 \to V_1 \oplus V_2$.
An \emph{equivariant} map with respect to a group $G$ is a map $f: V_1 \to V_2$ between representations $(V_1, \rho_1)$ and $(V_2, \rho_2)$ satisfying
$f(\rho_1(g) v_1) = \rho_2(g) f(v_1)$ $\forall \: g\in G$, $v_1 \in V_1$.

\section{Task: Sum of squares certificate of polynomial nonnegativity}\label{sec:task-maxlogdet}
For the first time, we propose machine learning for learning properties of a polynomial input. Many questions about polynomials, as discussed in Section~\ref{sec:introduction}, are invariant under a change of coordinates given by $SL(2, \mathbb{R})$. In this section, we introduce a widely applicable example, positivity verification, and we defer polynomial minimization to Appendix~\ref{app:experiments}. 
\footnote{While this problem makes sense in higher dimensions (with $SL(d,\mathbb{R})$), we restrict to $d=2$ for simplicity and defer higher dimensions to future work.}

We discuss the following problem in depth because even though the problem can be solved with convex optimization, classical methods are slow for even moderately high degree or dimension polynomials, so an efficient learned solution could be useful in practice.
Moreover, many problems about polynomials can be reduced to certifying nonnegativity (e.g. proving $\alpha$ is a lower bound on $p$ is equivalent to showing $p - \alpha \geq 0$).

One method to prove that a polynomial is nonnegative is by the semidefinite programming (SDP) method described in \citet{parrilo2000structured, lasserre2001global}.
Define the $d-$lift as $\vec{x}^{[d]} = \begin{pmatrix} y^d & xy^{d-1} & \cdots &x^d\end{pmatrix}^T$. Suppose $p(\vec{x}) = \vec{x}^{[d]^T}Q \vec{x}^{[d]}$ for some positive semidefinite $(d+1) \times (d+1)$ symmetric matrix $Q$. $\vec{x}^{[d]^T}Q \vec{x}^{[d]}$ is nonnegative by definition, so the equality to $p(\vec{x})$ implies $p$ is nonnegative for all $x$ and $y$. The problem of finding such a $Q$ is traditionally relegated to a convex optimization interior point solver. One convenient formulation given to a solver is \begin{equation}\label{eqn:analytic-center} f(p) = \textrm{argmax} \log \det Q \textrm{ such that } p(\vec{x}) = \vec{x}^{[d]^T}Q \vec{x}^{[d]} \textrm{ and } Q \succeq 0, \end{equation}
because this objective finds the analytic center \cite[\S 8.5]{boyd2004convex} of the feasible region, which conveniently avoids the boundary of the feasible region. If $p$ is positive on $\mathbb{R}^2 \setminus \{(0,0)\}$, then $f$ is well-defined as the optimization problem has a unique maximizer. The analytic center is a good choice precisely because of its scale-invariance and the following equivariance property.

For $A \in SL(2, \mathbb{R})$, define the \emph{induced} matrix $A^{[d]}$ as the $(d + 1) \times (d+1)$ matrix for which $(A\vec{x})^{[d]} = A^{[d]} \vec{x}^{[d]}$,
as discussed in \citet{marcus1992survey, marcus1973finite, parrilo2008approximation}. The induced matrices describe how the irreps act; therefore, they are analogous to the Wigner d-matrices for $SO(3)$.
Viewing a polynomial as an inner product between a basis and a coefficient vector, we get the important relation that $(A \vec{x})^Tp = \vec{x}^{[d]^T}(A^{[d]^T}p)$.
Therefore, the optimizer $f(p)$ has the equivariance property $f(p(A \vec{x})) = A^{[d]^T} f(p(\vec{x})) A^{[d]}$. 

This application is well-suited for machine learning because we often only care if there exists a positive semidefinite $Q$. If the model predicts some matrix satisfying $p(\vec{x}) =\vec{x}^{[d]^T} Q \vec{x}^{[d]}$ and $Q \succeq 0$, then $p$ is automatically nonnegative; it is simple to validate this certificate of nonnegativity. This feature is unusual for machine learning applications, but highly advantageous. Even in safety-critical systems, a pipeline could include a neural network whose output is then certified.

\section{$SL(2, \mathbb{R})$-equivariant architecture}\label{sec:arch}
The equivariance property of $f$ in \eqref{eqn:analytic-center} suggests using an equivariant learning technique. 
A fully equivariant architecture for $SL(2, \mathbb{R})$ has not been previously presented. Indeed, since $SL(2, \mathbb{R})$ is a noncompact group, forming invariants by group averaging is impossible, as there is no finite invariant measure over \slr (see \Cref{subsec:distributional_view} for details). However, there is nonetheless a straightforward strategy for equivariance. We adopt the architecture for noncompact groups given by \citet{bogatskiy2020lorentz}, which follows the established framework of taking tensor products between irreps and then applying the Clebsch-Gordan transformation \citep{thomas2018tfn, kondor2018clebsch}. 
Our fully equivariant $SL(2, \mathbb{R})$ architecture can be viewed as a natural specialization of their insightful Lorentz group architecture. In fact, the exposition simplifies dramatically because our binary form inputs lie precisely in the group's finite-dimensional irreps. Moreover, the Clebsch-Gordan decomposition is given by the classical transvectant, which has been well-studied in the invariant theory literature (see Section~\ref{sec:reptheorysl2}).

In \Cref{sec:reptheorysl2}, we provide the necessary background on the representation theory of \slr. In \Cref{sec:archdetails}, we describe the \slr-equivariant architecture. 
In \Cref{subsec:lack_of_universality}, however, we prove that \textbf{universality over equivariant polynomials does not provide universality over arbitrary smooth equivariant functions for \slr.} This is a surprising and general impossibility result, which contrasts with \citet{bogatskiy2020lorentz} and casts doubt on the possibility of any universal \slr-equivariant architecture. 

\subsection{Representation Theory of $SL(2, \mathbb{R})$} \label{sec:reptheorysl2}

For the purposes of this paper, we are interested in finite-dimensional representations of $SL(2, \mathbb{R})$. For every positive integer $d$, the $(d+1)$-dimensional vector space of degree $d$ binary forms ($V(d)$), is an irreducible representation of $SL(2, \mathbb{R})$. Throughout, we assume a monomial basis for $V(d)$ (i.e. $x^d$, $x^{d-1}y$, \dots, $y^{d}$) and represent elements of $V(d)$ as vectors of coefficients in $\mathbb{R}^{d+1}$. 
Let $p \in V(d)$, and define the corresponding action on $V(d)$ by  $\rho_d(g)p(\vec{x}) = p(g^{-1}\vec{x}).$
These polynomial representations are all of \slr's finite-dimensional irreps. Polynomial inputs are particularly well-suited to $SL(2,\mathbb{R})$-equivariance because they lie in the finite-dimensional irreducible representations.

Given two of these finite dimensional representations, $V(d_1)$ and $V(d_2)$, the tensor product representation decomposes as $V(d_1) \otimes V(d_2) \simeq \bigoplus_{n=0}^{\min(d_1, d_2)} V(d_1 + d_2 - 2n)$. The linear map verifying this isomorphism is an invertible, equivariant map we call $T$, which is the Clebsch-Gordan decomposition for \slr~ (e.g. \citep{bohning2010clebsch}). A linear map is fully specified when defined on a basis, which we now do for $T$. Let $p(x_1, y_1) \in V(d_1)$ and $q(x_2,y_2) \in V(d_2)$. The $d_1 + d_2 
 - 2n$ component of $T(p \otimes q)$ is given by the classical $n$th order \emph{transvectant} (see e.g. \citet{olver1999classical}): \begin{equation}\label{eqn:transvectant}\psi_n(p,q) := \left. \left(\left( \frac{\partial^2}{\partial x_1 \partial y_2}-\frac{\partial^2}{\partial x_2 \partial y_1}\right)^{n}p\cdot q \right) \right|_{\substack{x=x_1=x_2\\y=y_1=y_2}} \propto \sum_{m=0}^n (-1)^m\binom{n}{m}\frac{\partial^np}{\partial x^{n-m}\partial y^m}\frac{\partial^nq}{\partial x^m y^{n-m}} \end{equation}
 where the constant of proportionality depends only on the degrees and $n$.
This is the isomorphism between the tensor product space, and the ordinary spaces of binary forms.

\subsection{Architecture details }\label{sec:archdetails}

\begin{algorithm}
\caption{\slr-equivariant architecture\label{alg:cap}}
\begin{flushleft}
\textbf{Input:} $p_{init} \in \mathbb{R}^{d+1}$ 
where $\mathbb{R}^{d+1} \cong V(d)$, $d_1,\dots,d_L$

\textbf{Output:} $M \in S^{\frac{d}{2}+1}$ 
\end{flushleft}

    \begin{algorithmic}[1]
    \State $p_d = p_{init}$; \texttt{inputDegrees} $= \{d\}$; \texttt{outputDegrees} $= \{\}$
    \Comment{Irrep bookkeeping}
    \For{layer $\ell=0,\dots,L-1$}
        \State Init $Q_r = [\: ]$ for all $r$
        \For{$i, j$ in \texttt{inputDegrees}}
        \For{$n=0,\dots,\min(i,j)$}
        \State Append $\psi_n(p_i, p_j) \in \mathbb{R}^{i+j-2n}$ to $Q_{i+j-2n}$  \Comment{Nonlinearity and CG Decomp.}
        \State Append $i+j-2n$ to \texttt{outputDegrees} 
        \EndFor
        \EndFor
        \State $p_0 = \text{LearnedMLP}([Q_0, p_0])$ \Comment{MLP on Invariants + Skip Connection} 
        \For{$i$ in \texttt{outputDegrees} $\setminus \{0\}$}
         \State  $p_i = \text{LearnedLinearCombination}(Q_i, p_i)$ \Comment{Linear Layer + Skip Connection} 
         \EndFor
        \State \texttt{inputDegrees}=\texttt{outputDegrees}; \texttt{outputDegrees}=$\{\}$ 
        \EndFor
        \State Return $L(p_0, p_1, \dots, p_{d-1}, p_{init})$, as defined below
        \Comment{Final Linear Layer} 
    \end{algorithmic}
    \end{algorithm}
Our equivariant architecture is described in Algorithm 1, as well as in Figure \ref{fig:sl2_architecture}. Each layer can be described by the nonlinear part and the linear part, and we describe each below. The inputs are elements of the finite dimensional irrep spaces of $SL(2, \mathbb{R})$.

\textbf{Nonlinearity } Each input channel has a list of polynomials, which we represent with vectors in $\mathbb{R}^{d}$. We compute all pairwise tensor products and decompose these tensor products with $T$ from Section~\ref{sec:reptheorysl2}, which gives a list of polynomials. We additionally apply a multi-layer perceptron (``LearnedMLP'' in Algorithm 1) to any degree $0$ polynomials.

\textbf{Learned linear layer} After the Clebsch-Gordan (CG) decomposition $T$, we have a collection of polynomials ranging from degree $0$ to degree $2d$. Gather all resulting polynomials of degree $k$ (from any tensor product and across any channel) into the vector of polynomials $v$. Let $\#k$ be the number of such polynomials. Let $c$ be the number of output channels. $W$ is a $c \times \#k$ learnable matrix. Then $Wv$ are the inputs of degree $k$ for the next layer. This operation is denoted by ``LearnedLinearCombination'' in Algorithm 1, where we additionally append to $v$ the input to the layer of degree $k$ as a skip connection.

\textbf{Last layer (general)} The inputs to the last layer $L$ are elements of the irreducible representation spaces, and the output lies in the vector space corresponding to some finite-dimensional representation. In other words, $L: \bigoplus_{n=0}^d V(n) \to V$ for some reducible representation $\rho$ with associated vector space $V$.
Informally, Schur's Lemma describes the set of equivariant linear maps $L$ with this property as those which map linearly from the input irrep space to only matching irreps in $V$; see e.g. \citet[Chapter 2]{stiefel2012group} for details. 
Although this methodology on its own is quite standard, in the next paragraph, we show a convenient way of choosing such an $L$ for positivity verification.

\textbf{Last layer (positivity verification)} 
Suppose (for this paragraph) the input polynomial is degree $2d$. The output space is $V = S^{d+1}$, symmetric $(d + 1) \times (d + 1)$ matrices, which decomposes into irreps as $\bigoplus_{k=0}^{2d}V(k)$. An equivariant, linear map $L$ from $\bigoplus_{k=0}^{2d}V(k)$ to $S^{d+1}$ (which is unique up to a linear combination of channels by a real version of Schur's Lemma \citep[\S 8]{behboodi2022pac}) can be conveniently precomputed using the transvectant; see Section \ref{app:subsec_arch_details}. 
 For our application, we need $p(\vec{x}) = \vec{x}^{[d]^T} Q \vec{x}^{[d]}$. 
 Although this may seem difficult to enforce, there is an elegant solution: we overwrite the degree $2d$ irrep with the input polynomial before computing $L$. This yields the desired property, because it can be shown (see Lemma \ref{lem:lastlayerantidiag}) that $x^{[d]^T} L(0, \dots, p_i, \dots, 0) x^{[d]} = 0$ for any $i < 2d$, and  $x^{[d]^T} L(0,\dots, 0, p_{2d}) x^{[d]} = p_{2d}$. 

This architecture can represent any equivariant polynomial in the following sense, which is a restatement of Lemma D.1 of \citet{bogatskiy2020lorentz} (and a common primitive towards  universality). 
\begin{lem}\label{thm:arch2poly}
   Let $p: V \to U$ be a polynomial that is equivariant with respect to \slr, where $V$ and $U$ are reducible finite-dimensional representations of \slr. Then there exists a fixed choice of architectural parameters such that Algorithm~\ref{alg:cap} exactly computes $p$.
\end{lem}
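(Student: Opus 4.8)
The plan is to reduce the claim to a statement about equivariant \emph{linear} maps between representations, and then argue that iterating the transvectant nonlinearity of \eqref{eqn:transvectant} realizes every such map.

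First I would split $p$ into homogeneous parts. Writing $p = \sum_{j=0}^{D} p_j$ with $p_j$ the degree-$j$ homogeneous part of $p$ in the input coordinates, I would observe that each $p_j$ is \emph{separately} equivariant: since every $\rho_V(g)$ acts linearly, both $v \mapsto p_j(\rho_V(g)v)$ and $v \mapsto \rho_U(g)p_j(v)$ are homogeneous of degree $j$, so matching homogeneous parts in the identity $p(\rho_V(g)v)=\rho_U(g)p(v)$ forces $p_j(\rho_V(g)v)=\rho_U(g)p_j(v)$. Because Algorithm~\ref{alg:cap} retains every intermediate degree through its skip connections and sums them in the final layer $L$, it then suffices to produce each $p_j$ on its own.

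Next I would pass to representation theory. By polarization, an equivariant degree-$j$ homogeneous polynomial map $V \to U$ is the same data as an equivariant linear map $\mathrm{Sym}^j V \to U$, i.e. an element of $\mathrm{Hom}_G(\mathrm{Sym}^j V, U)$ with $G = SL(2,\mathbb{R})$, and $\mathrm{Sym}^j V$ is an equivariant summand of $V^{\otimes j}$. The crux is to show that running the network on the diagonal input $p_{init}$ for $L \ge \ceil{\log_2 j}$ layers produces, across its channels, an explicit spanning set for the irreducible constituents of $\mathrm{Sym}^j V$. This holds because the transvectant $\psi_n$ is precisely the invertible equivariant Clebsch--Gordan map $T$, so a single pairwise-transvectant layer on degrees $a,b$ computes the full irrep decomposition of that tensor product; iterating reproduces (up to the associativity isomorphisms) every constituent of $V^{\otimes j}$, and on the diagonal the identity $\psi_n(p,p)=(-1)^n\psi_n(p,p)$ kills the odd transvectants, so the output lands exactly in $\mathrm{Sym}^j V$ --- which is all $p_j$ sees.

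Finally I would apply Schur's Lemma. Once a spanning family for each irrep $V(r) \subseteq \mathrm{Sym}^j V$ is available across channels, the target map is, by the real Schur's Lemma already used for the last layer, a block-scalar map sending each copy of $V(r)$ to the matching copies in $U$; the ``LearnedLinearCombination'' layers and the final map $L$ realize exactly these same-degree linear combinations, so some weight choice computes $p_j$, and summing over $j$ gives $p$. I expect the main obstacle to be the multiplicity bookkeeping in the middle step: one must take the width $c$ and depth $L$ large enough that no needed copy of an irrep is collapsed before the output, and must verify that the iterated transvectant contractions span the \emph{entire} space $\mathrm{Hom}_G(\mathrm{Sym}^j V, V(r))$ rather than a proper subspace --- that transvectants generate all $SL(2,\mathbb{R})$-equivariants --- which is exactly where the classical invariant theory of binary forms must be invoked.
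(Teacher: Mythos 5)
Your proposal is correct and takes essentially the same route as the paper, which establishes this lemma by restating Lemma D.1 of \citet{bogatskiy2020lorentz}: decompose into homogeneous parts, polarize to equivariant linear maps on tensor powers, realize the iterated Clebsch--Gordan decomposition via transvectants, and finish with the real Schur's lemma for the learned same-degree linear combinations. The one clarification worth making is that the ``main obstacle'' you flag --- whether iterated transvectants span all of $\mathrm{Hom}_G(\mathrm{Sym}^j V, V(r))$ --- does not require classical invariant theory of binary forms: since the map $T$ is invertible at every stage, the iterated decomposition of $V^{\otimes j}$ is complete, and Schur's lemma alone then guarantees that every equivariant linear map out of it is a linear combination of the matching-irrep blocks the network already computes.
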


\subsection{Lack of universality}\label{subsec:lack_of_universality}
In this section, we prove that the architecture in \Cref{sec:archdetails} is not universal, in spite of Lemma \ref{thm:arch2poly}. In particular, there are no parameter settings (representing an equivariant polynomial times an arbitrary invariant) that uniformly approximate the equivariant function $f$ described in \eqref{eqn:analytic-center}. 
\begin{thm}\label{thm:non-universal}
    Let $\mathcal{N}_W(p)$ denote the output of Algorithm~\ref{alg:cap} with (learned) parameters $W$ applied to the input $p$. Let $f$ be the continuous, \slr-equivariant function defined in \eqref{eqn:analytic-center}. There exists an input polynomial $p$, and an absolute constant $\epsilon > 0$ such that for any $W$, $|f(p) - \mathcal{N}_W(p)| > \epsilon$. Therefore, the architecture of Section~\ref{sec:archdetails} is not universal.
\end{thm}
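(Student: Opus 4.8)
The plan is to first pin down exactly what the hypothesis class $\{\mathcal{N}_W\}$ can express, and then to defeat it using the non-compactness of \slr. By Lemma~\ref{thm:arch2poly} the architecture reproduces every equivariant polynomial, and the only nonlinear freedom beyond this is the ``LearnedMLP'' acting on the degree-$0$ (invariant) channels. Consequently every $\mathcal{N}_W$ is a continuous equivariant map of the form $p \mapsto \sum_\alpha I_\alpha(p)\,P_\alpha(p)$, where the $P_\alpha$ are equivariant polynomials $V(2d)\to S^{d+1}$ and the $I_\alpha$ are (approximately arbitrary) continuous \slr-invariant functions of $p$; the uniform closure of this class is what I must show does not contain $f$. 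The decisive structural feature is that the invariants $I_\alpha$ are \emph{constant on every orbit}, so along a single orbit $\mathcal{N}_W$ coincides with one fixed equivariant polynomial $P=\sum_\alpha I_\alpha\,P_\alpha$, and is therefore determined by its value at a single base point through $\mathcal{N}_W(\rho_{2d}(g)p)=\rho_{\mathrm{out}}(g)\mathcal{N}_W(p)$. I will also record that $f$ is homogeneous of degree $1$ in $p$ (scaling $p\mapsto\lambda p$ scales the feasible $Q$'s by $\lambda$ and only shifts $\log\det Q$ by a constant), which fixes the scaling normalization throughout.

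The strategy is then to choose a positive-definite input $p$ together with a non-compact one-parameter subgroup $g_t=\mathrm{diag}(e^{t},e^{-t})$ degenerating $p$ toward the boundary of the cone of positive-definite forms, and to compare growth rates. Because both $f$ and $\mathcal{N}_W$ are equivariant, the error along the orbit is $\rho_{\mathrm{out}}(g_t)\bigl(f(p)-\mathcal{N}_W(p)\bigr)$, where $\rho_{\mathrm{out}}(g_t)$ acts diagonally on the weight spaces of $S^{d+1}\cong\bigoplus_k V(2d-4k)$ with eigenvalues $e^{ct}$ of both signs. Uniform closeness for all $t$ would force $f(p)-\mathcal{N}_W(p)$ to vanish on every expanding weight space; letting the torus range over \slr, this would force $\mathcal{N}_W(p)=f(p)$ exactly. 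The genuine obstruction is produced by the fact that, for \slr, the polynomial invariants do \emph{not} separate orbits: I will exhibit a positive-definite $p$ lying in the closure of a family of distinct orbits sharing all of its invariants, so that continuity of every admissible $P=\sum_\alpha I_\alpha P_\alpha$ ties the value $\mathcal{N}_W(p)$ to the limiting transported values $\lim_t\rho_{\mathrm{out}}(g_t)\,\mathcal{N}_W(p)$, a link that the analytic center $f$ cannot respect even pointwise at $p$.

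The main obstacle, and the technical heart of the argument, is to show that the particular component of $f(p)$ that obstructs matching is genuinely nonzero and unreproducible. Concretely, I would decompose $f(p)$ according to the weight (torus) grading on $S^{d+1}$ and use the optimality conditions for the $\log\det$ maximizer --- namely that $Q^{-1}$ is orthogonal to the affine constraint space $\{Q:\vec{x}^{[d]^T}Q\vec{x}^{[d]}=p\}$ --- to certify that $f(p)$ has nonzero mass in an expanding weight space, while the same conditions fail for every value reachable as $\lim_t\rho_{\mathrm{out}}(g_t)$ of a boundary-consistent equivariant polynomial. Equivalently, this is the statement (Corollary~\ref{cor:noequivariantweierstrass}) that no equivariant polynomial times an invariant can track the analytic center across orbits that the invariants cannot distinguish. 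I expect the delicate points to be (i) constructing the degenerating family of equal-invariant, distinct orbits explicitly in a convenient low degree (e.g.\ binary quartics, $d=2$), and (ii) controlling $f$ under this degeneration precisely enough to extract an absolute $\epsilon$; homogeneity of degree $1$ is what keeps this extraction scale-free.
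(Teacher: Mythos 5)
Your submission is a research plan rather than a proof: the step you yourself identify as ``the technical heart'' --- exhibiting a positive-definite $p$ lying in the closure of distinct orbits sharing all its invariants, and controlling $f$ under the degeneration tightly enough to extract an absolute $\epsilon$ --- is never carried out. No concrete $p$ is produced, the claimed non-separation of orbits is not established (note that continuous invariants, which the LearnedMLP can implement, separate strictly more than polynomial invariants do, e.g.\ by sign conditions on the cone of definite forms), and the assertion that the analytic center ``cannot respect'' the forced identifications is exactly the statement to be proven, not a consequence of anything you derive.

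Beyond incompleteness, both proposed mechanisms fail as stated. The growth-rate argument along $g_t=\mathrm{diag}(e^t,e^{-t})$ shows only that no $\mathcal{N}_W$ can approximate $f$ \emph{uniformly over an unbounded orbit}; Theorem~\ref{thm:non-universal} demands a lower bound at a \emph{single} point with no hypothesis about behavior elsewhere, so this proves the wrong statement. The orbit-closure/continuity argument cuts both ways: $f$ is itself continuous on the cone of forms positive on $\mathbb{R}^2\setminus\{(0,0)\}$, so any identification of values forced on $\mathcal{N}_W$ by continuity along a convergent family is respected by $f$ as well, and no contradiction follows. What is needed is a rigidity property special to \emph{polynomial} equivariant maps that continuous equivariant maps lack --- and this is precisely what the paper supplies. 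Its proof takes $p=x^8+y^8$ and shows (Lemma~\ref{lem:balanced-transvectant}) that every transvectant, scalar multiplication by an invariant, and linear combination preserves the property that all monomials $x^ky^r$ appearing satisfy $k\equiv r \pmod 8$; propagating this ``balanced'' constraint through Algorithm~\ref{alg:cap} forces a fixed sparsity pattern on $\mathcal{N}_W(p)$ for \emph{every} $W$, while the matrix $f(p)$ computed from \eqref{eqn:analytic-center} has entries bounded away from zero exactly in the forbidden positions, yielding the absolute $\epsilon$. That constraint is an artifact of the algebraic (complexifiable) nature of the architecture's operations and is invisible to real \slr-orbit geometry and continuity alone, which is why your route, as designed, cannot close the argument.
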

This demonstrates more broadly that the function $f$ is not approximable by any sequence of equivariant polynomials (independent of a particular neural architecture). We highlight that this particular function is not an unnatural one
dreamt up for the sake of this proof; rather, it is the very function we
seek to approximate for positivity verification applications. We show that when $p = x^8 +y^8$, there is an algebraic obstruction to predicting $f(p)$ (see also \Cref{fig:hard_example_train_curves}).

\begin{defn}
Call the monomial $x^ky^r$ \emph{balanced mod $b$} if $k \equiv r \pmod{b}$. The polynomial $p$ is \emph{balanced mod $b$} if it is the weighted sum of monomials that are all balanced mod $b$.
\end{defn}
\begin{lem}\label{lem:balanced-transvectant}
     If $f$ and $g$ are balanced mod $b$, then for every $n$, $\psi_n(f,g)$ is balanced mod $b$.
\end{lem}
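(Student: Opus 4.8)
The plan is to use the bilinearity of the transvectant to reduce to the case of single monomials, and then to make a simple degree-counting observation. Since $\psi_n(f,g)$ is bilinear in its arguments (it is built from products of partial derivatives, as in \eqref{eqn:transvectant}), and since being balanced mod $b$ is by definition a property closed under taking weighted sums of balanced monomials, it suffices to prove the claim when $f = x^{k_1}y^{r_1}$ and $g = x^{k_2}y^{r_2}$ are individual monomials with $k_1 \equiv r_1 \pmod{b}$ and $k_2 \equiv r_2 \pmod{b}$.

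First I would compute the generic $m$-th summand of the transvectant formula in \eqref{eqn:transvectant}, namely $\frac{\partial^n f}{\partial x^{n-m}\partial y^m}\cdot\frac{\partial^n g}{\partial x^m\partial y^{n-m}}$. Differentiating a monomial only lowers its exponents and multiplies by a (possibly zero) falling-factorial constant, so each factor is a constant times a monomial: the $f$-factor is a multiple of $x^{k_1-(n-m)}y^{r_1-m}$ and the $g$-factor is a multiple of $x^{k_2-m}y^{r_2-(n-m)}$ (with the convention that a factor is $0$ when a differentiation order exceeds the available degree). The key bookkeeping observation is that, across the two factors, exactly $n$ derivatives in $x$ and exactly $n$ derivatives in $y$ are taken in total, \emph{independently of $m$}. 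Consequently every nonzero summand equals a constant times the single monomial $x^{(k_1+k_2)-n}\,y^{(r_1+r_2)-n}$, which does not depend on $m$.

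It then remains only to check the exponent difference of this monomial: $\big[(k_1+k_2)-n\big]-\big[(r_1+r_2)-n\big] = (k_1-r_1)+(k_2-r_2) \equiv 0 \pmod{b}$, using the two hypotheses. Hence each summand is either $0$ or a balanced-mod-$b$ monomial, and so their sum $\psi_n(f,g)$ is balanced mod $b$; extending back by bilinearity completes the argument. There is no genuine obstacle here: the whole proof rests on the single fact that the $n$-th transvectant lowers the $x$-degree and the $y$-degree by the same amount $n$, and therefore preserves the exponent imbalance $k-r$ exactly (in particular modulo $b$). The only point requiring mild care is handling the vanishing terms, but those contribute $0$ and so only help.
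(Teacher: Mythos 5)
Your proof is correct and follows essentially the same route as the paper's: reduce by bilinearity to monomials, observe that each summand of the formula in \eqref{eqn:transvectant} is proportional to $x^{k_1+k_2-n}y^{r_1+r_2-n}$, and check the congruence of exponents mod $b$. Your explicit handling of vanishing terms and the observation that the resulting monomial is independent of $m$ are minor refinements of the paper's argument, not a different approach.
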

\begin{proof} Suppose $f = \sum_i f_i$ and $g = \sum_j g_j$, where all $f_i$ and $g_j$ are monomoials balanced mod $b$. By bilinearity, $\psi_n(f,g) = \sum_{i,j} \psi_n(f_i,g_j)$. Therefore, it is enough to prove that $\psi_n(f_i,g_j)$ is the sum of balanced monomials. Let $f_i = x^{k_i} y^{r_i}$ and $g_j = x^{k_j}y^{r_j}$. A single term in the second definition of $\psi_n$ in \eqref{eqn:transvectant} gives a monomial proportional to
\begin{equation}
    \frac{\partial^n (x^{k_i} y^{r_i})}{\partial x^{n-m} \partial y^m}\frac{\partial^n (x^{k_j} y^{r_j})}{\partial x^{m} \partial y^{n-m}} \propto 
    x^{k_i + k_j -n}y^{r_i + r_j - n}.
\end{equation}
Since $k_i \equiv r_i$ and $k_j \equiv r_j$, then $k_i + k_j -n \equiv r_i + r_j -n \pmod{b}$.
\end{proof}

\begin{proof}[Proof of Theorem~\ref{thm:non-universal}] Since $p = x^8 + y^8$ is balanced mod 8, every polynomial output from every layer will be as well by Lemma~\ref{lem:balanced-transvectant}. (Note that neither the MLP applied to the invariants nor linear combinations of existing monomials will introduce new monomials.)
Therefore, the final degree 4 output is proportional to $x^2y^2$, implying there is no $x^4$ or $y^4$ term. For the $L$ described in \Cref{sec:archdetails}, any linear combination of $L(x^8)$, $L(y^8)$, $L(x^4y^4)$, $L(x^2y^2)$, and $L(1)$ has a sparsity pattern of  \[\begin{pmatrix}
\times & 0 & 0 & 0 & \times \\
0 & 0 & 0 & \times & 0\\
0 & 0 & \times & 0 & 0\\
0 & \times & 0 & 0 & 0\\
\times & 0 &0 &0 &\times
\end{pmatrix} \text{; however, } f(p) \approx 
\begin{pmatrix}
      1 & 0 &\boxed{-1.563} &0 &1/3\\
 0 &\boxed{3.126}& 0& -8/3& 0\\
\boxed{-1.563} &0 &14/3 &0 &\boxed{-1.563}\\
0 &-8/3& 0& \boxed{3.126} &0\\
1/3 &0 &\boxed{-1.563}& 0& 1
\end{pmatrix},
\] which is a rounded/truncated result from \citet{aps2022mosek}. The sparsity pattern can be checked via the transvectant definition of $L$. Notice that the boxed numbers in the true solution (right) must be zero for any output of $\mathcal{N}_W(x^8 + y^8)$, regardless of $W$. Therefore, no output of the neural net can have the correct sparsity pattern.
\end{proof}

In light of Lemma~\ref{thm:arch2poly}, the problem is not inherent to our specific architecture, because our architecture can represent any equivariant polynomial. Instead, the problem must fundamentally be that $f$ cannot be approximated by equivariant polynomials.
Therefore, we have the following corollary of potential independent interest, even outside of the machine learning community.
\begin{cor}\label{cor:noequivariantweierstrass} There exists a continuous, \slr-equivariant function and a compact subset of its domain for which no sequence of \slr-equivariant polynomials times \slr-invariant functions converge pointwise to the function on the subset.
\end{cor}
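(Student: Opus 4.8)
The plan is to upgrade the obstruction of \Cref{thm:non-universal} from a statement about Algorithm~\ref{alg:cap} to one about \emph{arbitrary} \slr-equivariant polynomials, and then to check that multiplying by an invariant cannot rescue convergence. I take the continuous equivariant function to be the analytic-center map $f$ of \Cref{eqn:analytic-center}, which \Cref{sec:task-maxlogdet} records is well-defined, continuous, and equivariant on the open cone of positive-definite binary forms. For the compact subset I take the single point $K=\{p_0\}$ with $p_0=x^8+y^8$; since $p_0>0$ on $\mathbb{R}^2\setminus\{(0,0)\}$ it lies in the domain of $f$, and because pointwise convergence already fails at this one point, any larger compact set containing $p_0$ (for instance its $SO(2,\mathbb{R})$-orbit) would work just as well.

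First I would isolate the target value: \Cref{thm:non-universal} exhibits $f(p_0)$ as a $5\times5$ symmetric matrix whose boxed entries are nonzero and bounded away from zero by a fixed constant $\epsilon$. The heart of the argument is then the claim that \emph{every} \slr-equivariant polynomial $P:V(8)\to S^{5}$ satisfies $P(p_0)=0$ in exactly those boxed positions. To prove this I would invoke \Cref{thm:arch2poly}: Algorithm~\ref{alg:cap} computes any equivariant polynomial for a suitable choice of parameters, so it suffices to show the vanishing for the network output at $p_0$. This is precisely the balanced-mod-$8$ mechanism of \Cref{thm:non-universal}: since $p_0$ is balanced mod $8$, \Cref{lem:balanced-transvectant} together with the fact that neither the invariant MLP nor the linear combinations introduce new monomials forces every intermediate irrep channel at input $p_0$ to remain balanced mod $8$ (e.g. the degree-$4$ channel is proportional to $x^2y^2$, with no $x^4$ or $y^4$). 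The equivariant readout into $S^5$ is, by the real Schur's lemma cited in \Cref{sec:archdetails}, unique up to a combination of channels, so it is a linear combination of the fixed maps $L(\cdot)$ computed in the theorem; applied to balanced-mod-$8$ inputs, every such combination has zeros in the boxed positions.

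Next I would dispense with the invariant factors. Let $\{I_n\cdot P_n\}$ be any sequence of \slr-equivariant polynomials $P_n$ multiplied by \slr-invariant functions $I_n$. Evaluated at the fixed point $p_0$, each invariant $I_n(p_0)$ is just a real scalar, so $(I_n\cdot P_n)(p_0)=I_n(p_0)\,P_n(p_0)$ is a scalar multiple of $P_n(p_0)$ and therefore still vanishes in the boxed positions for every $n$. Hence along the entire sequence the boxed entries are identically zero; if they converged they would converge to $0$, whereas the corresponding entries of $f(p_0)$ differ from $0$ by at least $\epsilon$. Thus no such sequence converges to $f$ even pointwise at $p_0\in K$, which is the assertion of \Cref{cor:noequivariantweierstrass}.

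The main obstacle is the conceptual bridge in the second paragraph: one must be certain that the boxed-zero pattern at $p_0$ is intrinsic to \emph{every} equivariant polynomial and to every equivariant readout, rather than an artifact of our particular network. \Cref{thm:arch2poly} handles the first concern, and Schur uniqueness handles the second; the robust way to see the underlying reason is that ``balanced mod $8$'' is exactly the condition of being fixed by the order-$8$ torus element $\mathrm{diag}(\omega,\omega^{-1})$, with $\omega$ a primitive eighth root of unity, inside the complexification $SL(2,\mathbb{C})$. A real equivariant-polynomial identity extends to $SL(2,\mathbb{C})$ by Zariski density, so $P(p_0)$ must be fixed by this element; a one-line diagonalization then identifies its fixed subspace of $S^5$ as precisely the span on which the boxed entries vanish, which is the only computation I would still need to carry out in full.
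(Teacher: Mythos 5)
Your proposal is correct and follows essentially the same route as the paper: the corollary there is obtained exactly by combining the sparsity obstruction of \Cref{thm:non-universal} at $p_0 = x^8+y^8$ with the polynomial universality of \Cref{thm:arch2poly}, so that every \slr-equivariant polynomial (scaled by the value of any invariant at $p_0$, which is just a scalar) must vanish in the boxed entries where $f(p_0)$ does not, precluding pointwise convergence. Your explicit treatment of the invariant factors and the closing complexification sketch only make explicit what the paper leaves implicit, rather than constituting a different argument.
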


\textbf{What is different?} 
There are a few major differences between this setup and other related universality results. First, the function $f$ is only defined for a subset of binary forms. Another difference is the equivariance to the real group \slr~ and not $SL(2, \mathbb{C})$. Theorem 4.1 of \citet{bogatskiy2020lorentz} states that the analog of the architecture in \Cref{sec:archdetails} for $SL(2, \mathbb{C})$ is universal over equivariant functions.
More generally, it is common to prove universality via polynomial approximation \citep{yarotsky2018universal, dym2021universality}. What goes wrong with such proofs in the real case?
One problem is that the algebra of invariants with respect to the group, and the algebra of invariants with respect to a \textit{maximally compact subgroup}, may differ for the real-valued case, as noted by \citet{haddadin2021invariant}.\footnote{For example, $SO(2, \mathbb{R})$ is the maximal compact subgroup of $SL(2, \mathbb{R})$ \cite[p. 19]{lang1985sl}. However, $a+c$ is an invariant of the polynomial $ax^2 + bxy + cy^2$ under an $SO(2, \mathbb{R})$ action, but is not an invariant under an $SL(2, \mathbb{R})$ action.}

\textbf{Implications of these results.}
Nearly all equivariant universality results of which we are aware rely on polynomial approximation as a crucial intermediate step \citep{yarotsky2018universal, dym2021universality, bogatskiy2020lorentz}. Moreover, the exhibited hard function is exactly the one we wish to approximate in many applications. Therefore, it is unclear whether any \slr-equivariant architecture holds promise for this class of problems. Instead, we propose returning to \textit{data augmentation}, as well as an architecture equivariant to \slr's well-studied maximal compact subgroup, $SO(2,\mathbb{R})$. One might hope to use a clever combination of (non-polynomial) renormalization and $SO(2,\mathbb{R})$-equivariance to circumvent our impossibility result and recover full \slr-equivariance; however, we prove in Proposition~\ref{app:prop_scale_inequivalent} of \Cref{app:comparison_so2_scale} that this is not possible. For the sake of completeness, we include the SL2Net architecture in our experiments, but find that it underperforms alternatives. 

\section{Experiments}
\label{sec:experiments} 
Since
learning on polynomial input data is a novel contribution,
there do not yet exist standardized benchmarks. Therefore, we design our own tasks: polynomial minimization and positivity verification. We have released all data generation (as well as training) code, so that future research may build on these preliminary benchmarks.
We defer an exploration of polynomial minimization to Appendix~\ref{app:experiments}.

Our experiments are divided based on the underlying distribution of polynomials used to generate synthetic data,  described in further detail in \Cref{app:experimental_setup}. The second distribution is a structured class of polynomials that are optimal for the Delsarte linear programming bound for the cardinality of a spherical code \citep{Delsarte1977}, providing a realistic distribution of ``natural'' polynomials. An interior point method \citep{aps2022mosek} is used to generate the values of $f(p)$ in \eqref{eqn:analytic-center}.

There are several practical considerations when augmenting by elements of \slr~. First, since \slr~is not a compact group, \emph{any data augmentation induces a distribution shift} (see Appendix~\ref{app:conditioning_considerations} for a proof). This is in contrast to compact groups, where any datapoint in an orbit 
is just as likely as any other datapoint in that orbit.
Therefore, \slr-equivariant methods aid primarily in out-of-distribution generalization, rather than in-distribution sample complexity. 
Further challenges associated with the non-compactness of \slr~are discussed in Appendix~\ref{app:conditioning_considerations}.

In our experiments, we compare several instantiations of equivariant learning. The first is the $SL(2, \mathbb{R})$ equivariant architecture described in Section~\ref{sec:archdetails}. The second is a class of multilayer perceptrons (MLPs). We train the MLPs on an \emph{augmented} training distribution: starting with the same training set, we apply random \slr~transformations of the training polynomials. We control the conditioning of the augmentation matrices both to avoid numerical issues, and to induce more or less dramatic distribution shifts to obtain several different trained MLPs. The third architecture is an $SO(2, \mathbb{R})$ equivariant model, whose details are described in Appendix~\ref{app:so2equivarch}. We train additional versions of the $SO(2, \mathbb{R})$ equivariant architecture by including data augmentations of various condition numbers. A description of the model names are given in Table~\ref{tab:model_labels}. Experiment details for all experiments, as well as equivariance tests for the various models, are included in Appendix~\ref{app:experiments}. 
\begin{table}[]
    \centering
\begin{tabular}{c|l }
        Model Name & Description \\
        \hline
        mlp-aug-rots & MLP with rotation augmentations\\
        mlp-aug-$a$-$b$ & MLP with augmentations by $g^{[d]}$ and $g$ has condition number $\kappa$ s.t. $a\leq \kappa \leq b$\\
        SO2Net-aug-$a$-$b$ & $SO(2, \mathbb{R})$ equivariant network with data augmented by $g^{[d]}$ as above\\
        SO2Net & $SO(2, \mathbb{R})$ equivariant architecture with no data augmentation\\
        SL2Net & $SL(2, \mathbb{R})$ equivariant architecture with no data augmentation\\
        
        MLP & multilayer perceptron with no data augmentation      
    \end{tabular}
    \caption{Model names and meanings}
    \label{tab:model_labels}
\end{table}

\textbf{Timing Comparison: Trained Network vs Solver} 
An impetus for turning to machine learning was to find faster ways of solving polynomial problems in practice.  We compare the run time of the solves to generate the dataset, with the wall time of a trained MLP on a CPU in Table~\ref{tab:timing-accuracy}. Since Mosek \citep{aps2022mosek} was used to generate the data distribution, those values are used as ground truth. The MLP is about two orders of magnitude faster, while still very accurate,
as reported in Table~\ref{tab:timing-accuracy}. While the true test will be for much higher degrees, where traditional SDP solvers become prohibitive, these results are strong experimental motivation to pursue machine learning for this task.

\begin{table}[]\centering
  \begin{tabular}{c|c c c c c}
  Degree & 6 & 8 & 10 & 12 & 14\\
  \hline
  MLP NMSE & 9.5e-6 & 6.0e-5 & 2.9e-5 & 2.3e-5&1.1e-5\\
  MLP times (min)& 0.062 &0.082 &0.17 &0.22& 0.29\\
  SCS NMSE & 2.7e-5 & 6.2e-5&1.2e-4&2.7e-4&1.2e-3\\
  SCS times (min)&3.50 & 4.94 &  9.11 &18.8& 37.4\\
\end{tabular}
  \caption{Comparison of the MLP to the first order solver  SCS \citep{ocpb:16}. The mean squared errors are with respect to the ground truth labels computed by the second order solver, Mosek \citep{aps2022mosek}. 
  The times were taken for 5,000 test examples on a single CPU.
  }\label{tab:timing-accuracy}%
\end{table}

\textbf{Comparison of Equivariance and Augmentation for OOD Generalization} 
In Figure~\ref{fig:main_error_plot}, we report the test errors for each of the models.
We did not only test on the test dataset; we also randomly augmented the test dataset with $A \in SL(2,\mathbb{R})$ transformations. The average value of the condition number of $A^{[d]}$ for the transformations on the test set is labeled on the horizontal axis. The leftmost points are the test errors on the original test dataset. The rightmost points are the test errors resulting from transforming the test dataset with transformations with large condition numbers.

\textbf{Observations} On the untransformed test set, the best results are the MLP trained with augmentations close to rotations, followed by the SO2Net and unaugmented MLP. An advantage of the SO2Net is that it has about half as many parameters (see Appendix \ref{app:experimental_setup} for model parameter counts).
Using high condition numbers for the \emph{training} augmentation of an MLP increases the test error of the untransformed dataset, validating our assertions in \Cref{subsec:distributional_view} about distribution shifts induced by poorly conditioned matrices. Similarly, some models have this behavior when they encounter distribution shifts in the \emph{test} set: the MLP, MLP augmented with rotations, and SO2Nets share the trend that the test error increases as the test distribution undergoes more transformations. 

\begin{figure}[htbp]
    \centering
\begin{subfigure}{.43\textwidth}\includegraphics[width=\textwidth]{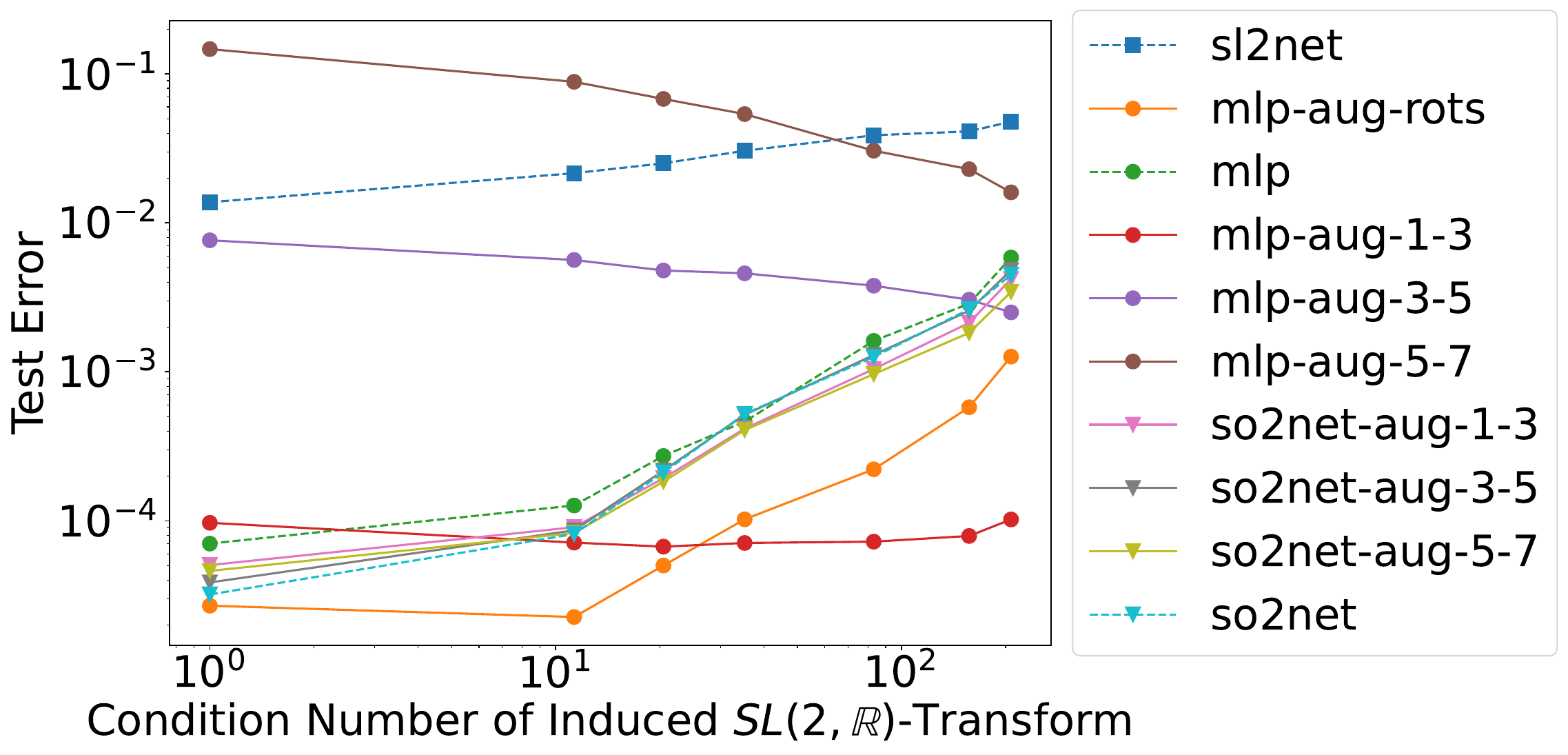} 
    \caption{Test errors for max det on random polynomials}
    \end{subfigure}
\hfill
    \begin{subfigure}{.43\textwidth}\includegraphics[width=\textwidth]{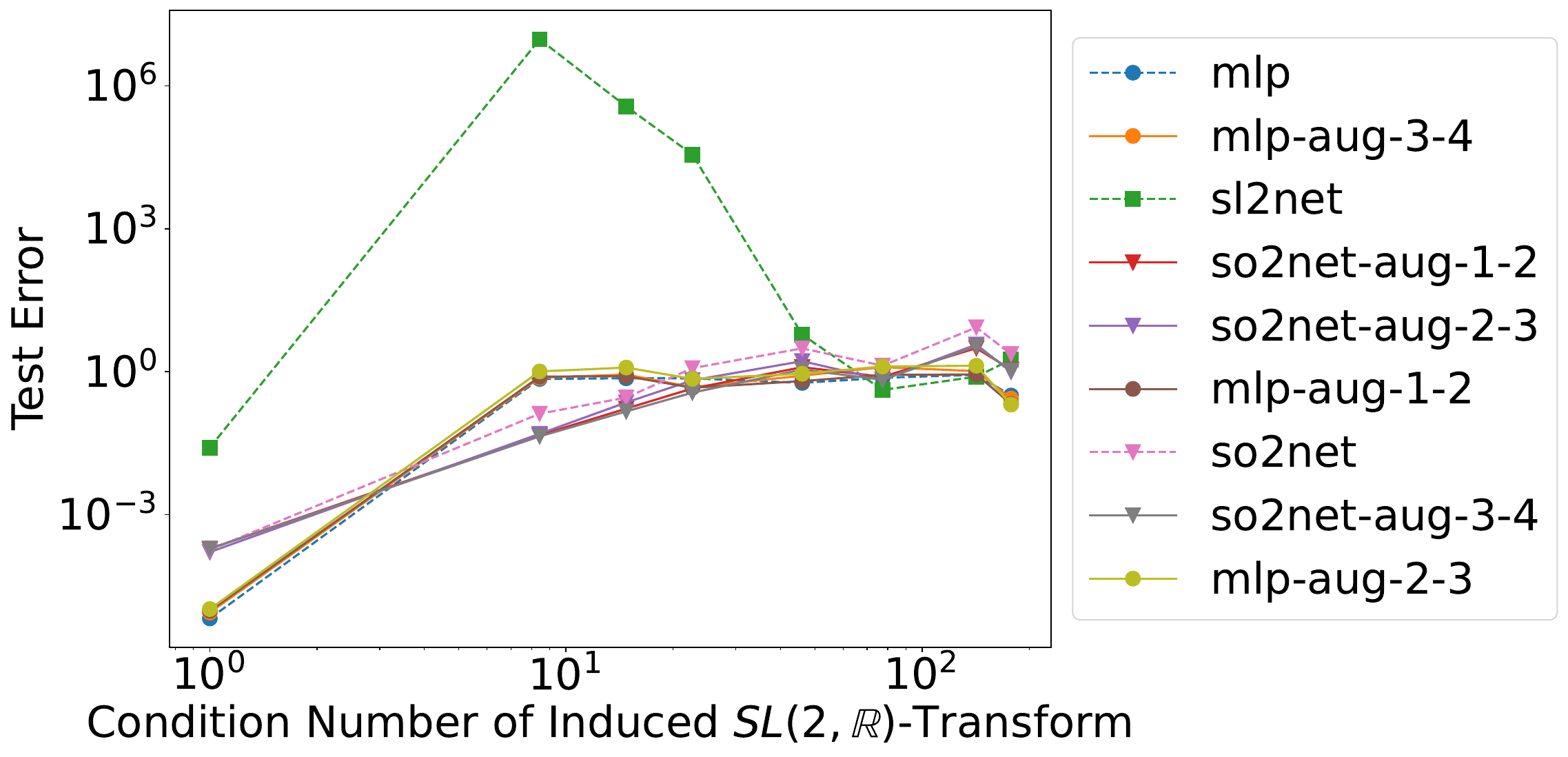} 
    \caption{Test errors for max det on polynomials arising from spherical code bounds}
    \end{subfigure}
    \caption{\textbf{Test errors. } The test dataset is transformed by random $A \in SL(2, \mathbb{R})$. The average value of the condition numbers of $A^{[d]}$ for the transformations is labeled on the horizontal axis. The computed loss is the MSE normalized by the ground-truth labels' norm (from the SDP) and with an additive term to avoid dividing by zero, $\frac{\| \mathcal{N}(p)- f(p)\|_\text{Fro}^2}{1+\|f(p)\|_\text{Fro}^2}$, and averaged over 3 independent runs.}\label{fig:main_error_plot}
\end{figure}

\section{Conclusion}\label{sec: conclusion}

In this work, we demonstrated the promise of machine learning for polynomial problems, particularly with equivariant learning techniques. Even a simple MLP can vastly outperform an SDP solver in terms of speed, while equivariance-based approaches such as data augmentation and rotation-equivariance can improve out-of-distribution generalization. Surprisingly, although we construct an exactly \slr-equivariant architecture capable of representing any \slr-equivariant polynomial (demonstrating that one can overcome some of the superficial difficulties of equivariance to noncompact groups), it turns out that this does not suffice for universality. Therefore, any approach to equivariant architectures based on tensor products or polynomial approximation is doomed to fail.

As future work, it would be interesting to consider paradigms for equivariance that sidestep the fundamental roadblock of polynomial approximation. One promising direction is frame averaging \citep{puny2021frames} for \slr. From an applications perspective, our proof-of-concept experiments provide a strong signal to pursue the acceleration of polynomial optimization with machine learning.

\bibliography{sources}
\bibliographystyle{iclr2024_conference}

\appendix

\clearpage
\section*{Table of Contents}
\startcontents[sections]
\printcontents[sections]{l}{1}{\setcounter{tocdepth}{2}}

\section{Technical Background and Details}\label{app:technical_background}

\subsection{Further architecture details}\label{app:subsec_arch_details}
In Figure \ref{fig:sl2_architecture}, we provide a visual of the \slr-equivariant architecture.

\begin{figure}
    \centering
    \includegraphics[trim={110, 450, 890, 60}, clip, width=0.7\textwidth]{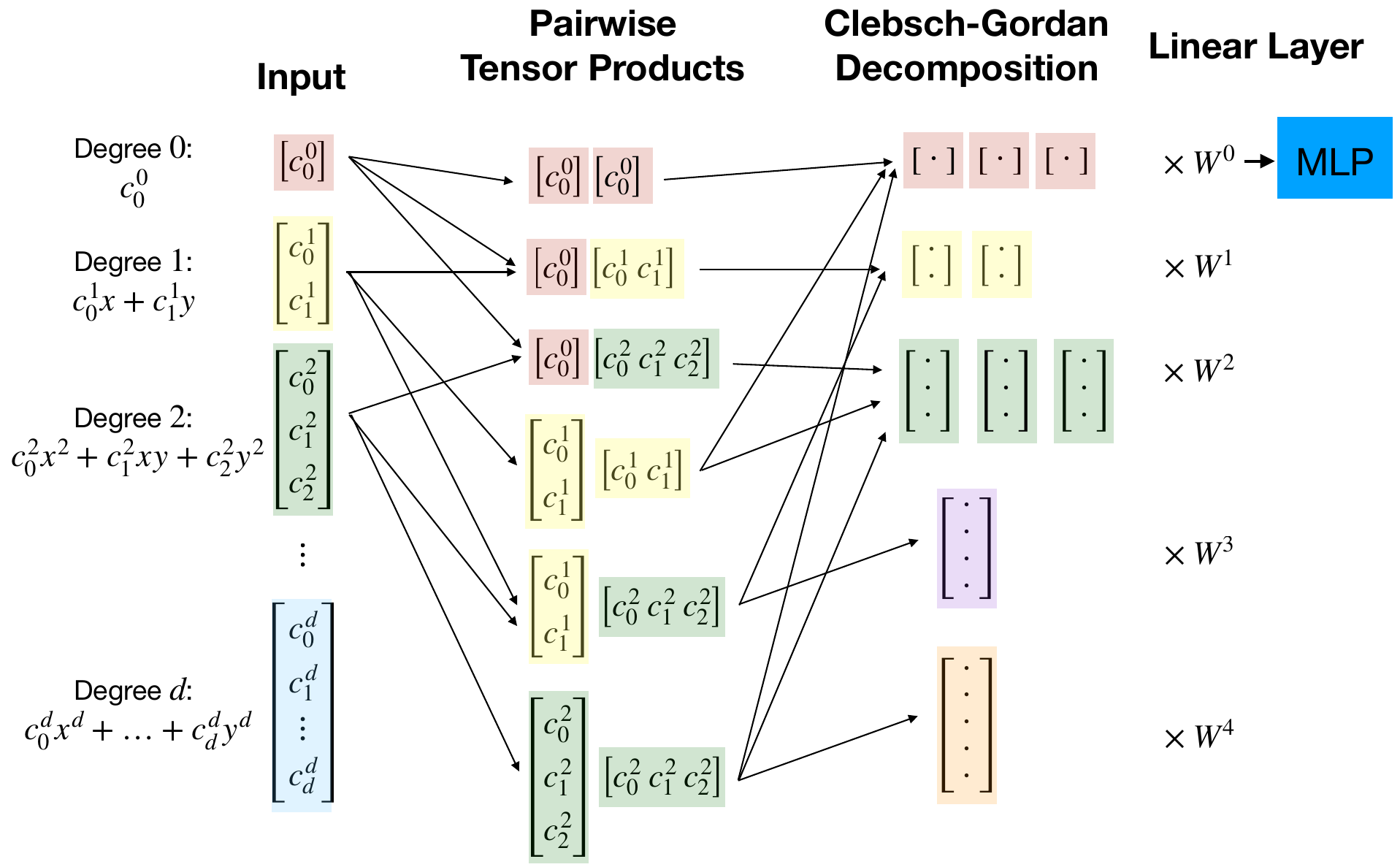}
    \caption{\textbf{Our $SL(2,\mathbb{R})$-equivariant layer.} We use color to indicate in which irrep's vector space a given activation resides, i.e. the degree of polynomials in that vector space. The inputs to the layer live in the finite-dimensional irreps of \slr, corresponding to homogeneous polynomials of different degrees. The nonlinear layer takes all pairwise tensor products of these input polynomials, and the resultant matrices are decomposed back into elements of the irreps using the Clebsch-Gordan decomposition of \slr. A simple consequence of Schur's Lemma is that the only possible equivariant linear layer is one that linearly combines elements within each irrep's vector space individually, which is reflected in our linear layer. (Any isomorphism from an irrep to itself can be written as linear combinations of independent channels from the same irrep according to a real version of Schur's Lemma \citep[\S8]{behboodi2022pac}.) Finally, we apply an MLP to the invariants. The outputs of the layer again reside in the irrep vector spaces, and so we can compose several layers of this form.
    \label{fig:sl2_architecture}}
\end{figure}

We now describe how we compute the linear equivariant map (known as an intertwiner) $L$. 

We first construct the linear, equivariant map $L^{-1}$ from $V$ to irreps. Once we have defined $L^{-1}$, it is trivial to obtain $L$ via a one-time matrix inversion. To obtain $L^{-1}$, we will define $L^{-1}$ on each basis element of $V$, given by $e_i \otimes e_j + e_j \otimes e_i$ where $e_i$ represents the monomial $x^i y^{2d-i}$. 
Therefore, $L^{-1}(M)$ is in fact the Clebsch-Gordan decomposition of the element $M$ in the tensor product representation space, since it describes the change of basis from tensor products of irreps to individual irreps. As we have seen, the transvectant from \eqref{eqn:transvectant} is precisely this equivariant map between the spaces $V(d) \otimes V(d)$ and $\bigoplus_{k=0}^{2d}V(k)$. Therefore, 
$L^{-1}(e_i \otimes e_j + e_j \otimes e_i) = \{\psi_n(e_i,e_j)\}_{n=0}^{2d}$ is the inverse of the last layer. One can verify that this $L^{-1}$ is linear and has the appropriate equivariance property.

The following lemma implies that setting the degree $2d$ component to the original polynomial $p$ in the last layer correctly enforces the constraint $x^{[d]^T}M x^{[d]} = p(x)$. 
\begin{lem}
    \label{lem:lastlayerantidiag}
    If $L$ is the last layer as described above, $x^{[d]^T}L(p, 0, \dots, 0)x^{[d]} = p(x)$.
\end{lem}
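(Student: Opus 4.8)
The plan is to recognize the quadratic-form map $Q\colon S^{d+1}\to V(2d)$, $Q(M)=\vec{x}^{[d]^T}M\vec{x}^{[d]}$, as exactly the top (degree-$2d$) component of the Clebsch--Gordan isomorphism $L^{-1}$, and then simply invert. First I would make the identification $S^{d+1}\cong \mathrm{Sym}^2 V(d)$ explicit through the monomial basis of the lift, sending the standard direction $e_i$ to the monomial $(\vec{x}^{[d]})_i = x^i y^{d-i}$. Writing $M=\sum_{i,j}M_{ij}\,e_i\otimes e_j$, this gives
\[
\vec{x}^{[d]^T}M\vec{x}^{[d]}=\sum_{i,j}M_{ij}\,x^{i}y^{d-i}\,x^{j}y^{d-j}=\sum_{i,j}M_{ij}\,x^{i+j}y^{2d-i-j},
\]
which is precisely the zeroth transvectant $\psi_0$ (plain multiplication after identifying the two sets of variables) applied to $M$ viewed as an element of $V(d)\otimes V(d)$. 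Since by construction $L^{-1}(e_i\otimes e_j+e_j\otimes e_i)=\{\psi_n(e_i,e_j)\}_n$ and the $n=0$ slot is exactly this product, $Q$ agrees, up to a fixed scalar, with the degree-$2d$ output component of $L^{-1}$.

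Next I would use that $L^{-1}$ is an equivariant isomorphism that is block-diagonal across the isotypic decomposition of $S^{d+1}$, in which $V(2d)$ occurs with multiplicity one: it carries each irreducible summand $V(k)\subseteq S^{d+1}$ onto the degree-$k$ output slot and nothing else. Because $Q$ is (up to scalar) the degree-$2d$ component of $L^{-1}$, composing with its inverse yields $Q\circ L=\mathrm{const}\cdot\pi_{2d}$, where $\pi_{2d}$ reads off the degree-$2d$ input slot. Hence $\vec{x}^{[d]^T}L(p,0,\dots,0)\vec{x}^{[d]}$ is a scalar multiple of $p$, while $\vec{x}^{[d]^T}L(0,\dots,p_i,\dots,0)\vec{x}^{[d]}=0$ for every slot with $i<2d$, since $L$ maps those inputs into the complementary irreps $V(i)\subseteq S^{d+1}$, all of which lie in $\ker Q$. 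Equivalently, one can argue purely by (the real form of) Schur's Lemma: $Q$ is an equivariant map onto the irreducible $V(2d)$ that appears only once in $S^{d+1}$, so it must annihilate the other isotypic components and act as a single scalar on the $V(2d)$ copy. The concrete reading of $\ker Q$ is that the coefficient of $x^m y^{2d-m}$ in $Q(M)$ is the anti-diagonal sum $\sum_{i+j=m}M_{ij}$, which is what the lemma's name records. Summing the two cases by linearity then gives the statement for the overwriting rule used in the last layer.

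The main obstacle is pinning the surviving scalar to exactly $1$. Both $Q$ and the $n=0$ component of $L^{-1}$ are equivariant maps into $V(2d)$, so Schur forces them to be proportional, but the transvectant and symmetrization conventions introduce bookkeeping factors (for instance, a factor of two between the symmetric basis element $e_i\otimes e_j+e_j\otimes e_i$ and the entries it contributes to the quadratic form). I would therefore fix the normalization of $L$ by evaluating on the single top-corner input $M=e_d\otimes e_d$, where $\vec{x}^{[d]^T}M\vec{x}^{[d]}=x^{2d}=\psi_0(e_d,e_d)$; this determines the constant and confirms that, with the normalization baked into $L$, the surviving scalar is $1$. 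This is the only genuinely delicate point; everything else is routine manipulation in the monomial basis together with the multiplicity-one application of Schur's Lemma.
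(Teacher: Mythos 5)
Your proof is correct, but it takes a genuinely different route from the paper's. The paper's argument is an orbit-and-linearity reduction: $L$ is linear and equivariant by construction, every binary form of degree $2d$ is a sum or difference of $2d$th powers of linear forms, and each such power $(ax+by)^{2d}$ lies in the $SL(2,\mathbb{R})$-orbit of $x^{2d}$; hence it suffices to verify the identity at the single representative $p=x^{2d}$, where $L(x^{2d},0,\dots,0)$ is the matrix with a single nonzero corner entry and the quadratic form visibly returns $x^{2d}$. You instead invert the explicit definition of $L^{-1}$: the evaluation map $Q(M)=\vec{x}^{[d]^T}M\vec{x}^{[d]}$ is (a fixed multiple of) the zeroth-transvectant component of $L^{-1}$, so $Q\circ L=c\,\pi_{2d}$, where $\pi_{2d}$ projects onto the degree-$2d$ slot, falls out of $L^{-1}\circ L=\mathrm{id}$, with the multiplicity-one Schur argument as an optional alternative. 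Your route buys three things the paper's does not: it proves in the same stroke the companion claim of \Cref{sec:archdetails} that $x^{[d]^T}L(0,\dots,p_i,\dots,0)x^{[d]}=0$ for $i<2d$ (the paper's lemma and proof address only the top slot); it avoids the Waring-type spanning fact, which the paper justifies only by a parameter count; and its main line needs no equivariance of $L$ at all, only its definition as the inverse of the transvectant map. The paper's route buys brevity. Both arguments ultimately rest on the same corner computation, namely $\psi_0(x^d,x^d)=x^{2d}$ and $\psi_n(x^d,x^d)=0$ for $n\ge 1$. The one point to watch in your writeup is the claim that the scalar relating $Q$ to $\pi_{2d}\circ L^{-1}$ is the \emph{same} on diagonal and off-diagonal basis elements of $S^{d+1}$ --- this uniformity is exactly what makes a single constant $c$ exist, and it hinges on treating the symmetrization factor of two consistently in the definition of $L^{-1}$ (the paper itself is loose on this point). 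You flag precisely this issue, and your resolution --- pinning $c$ by the corner evaluation, with $L$ normalized so that $c=1$ --- coincides with the convention the paper's own proof implicitly adopts, so this is a matter of bookkeeping rather than a gap.
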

\begin{proof}
    The last layer is linear and equivariant by construction. First, we note that every binary form is the sum or difference of $2d$th powers of linear forms. (Each form in the sum/difference has 2 free parameters, so with an arbitrary number we can write any binary form). Therefore, it is enough that 
    \begin{equation}
        x^{[d]^T}L(x^{2d}, 0,\dots, 0)x^{[d]} = x^{[d]^T} \pmat{1 & 0 & \cdots & 0\\ 0 & 0 & \cdots &0\\
        \vdots & \vdots & \ddots & \vdots\\ 0 & 0 & \cdots & 0}x^{[d]^T}.
    \end{equation}
\end{proof}
\subsection{Scale homogeneity and $SO(2,\mathbb{R})$ vs $SL(2,\mathbb{R})$ equivariance} \label{app:comparison_so2_scale}

In this work, we have evaluated both $SL(2, \mathbb{R})$ and $SO(2, \mathbb{R})$ equivariant architectures. Because its a bigger group, $SL(2, \mathbb{R})$ equivariance is naturally stronger than $O(2)$ equivariance. However, the polynomial positivity verification problem explored in Section \ref{sec:experiments} has the additional structure that it is not only equivariant, but also 1-scale homogeneous: if homogeneous polynomial $p(\vec{x})$ has verifier matrix $M$ of maximal determinant, then $cp(\vec{x})$ has verifier matrix $cM$ of maximal determinant. (Similarly, polynomial minimization is also 1-homogeneous, although it is $SL(2, \mathbb{R})$-\emph{invariant}, not equivariant.) The first implication of this structural property is that we may enforce it via data normalization, as discussed below:

\paragraph{Data normalization}
If a function $f$ is $1$-homogeneous, and one has an approximating function $g$, it is possible to make $g$ 1-homogeneous via the following procedure:
\begin{align*}
\tilde{g}(x) &:= \|x\|g\Big(\frac{x}{\|x\|}\Big)
\end{align*}
Indeed, this is the normalization we employ in all of our experiments, as data normalization (having all inputs roughly of the same scale) is a widespread and important technique to achieve good neural network performance during training. However, it is important to note that even if $g$ is $SL(2, \mathbb{R})$-equivariant, $\tilde{g}$ may not be. If $A \in SL(2, \mathbb{R})$, then
\begin{align*}
    \tilde{g}(Ax) &= \|Ax\|g\Big(\frac{Ax}{\|Ax\|}\Big) \stackrel{?}{=} A \circ \tilde{g}(x) = A \circ \Bigg[ \|x\|g\Big(\frac{x}{\|x\|}\Big) \Bigg] = \|x\| g\Big( \frac{Ax}{\|x\|} \Big)
\end{align*}
However, $\|Ax\| \neq \|x\|$ in general; the uncertain equality does not hold unless $g$ is already 1-homogeneous. 

One might still reasonably ask whether $SL(2, \mathbb{R})$ and $SO(2, \mathbb{R})$ equivariance are actually that different in the presence of scale equivariance, since scale equivariance implies that all interesting behavior in the function to be learned is captured by its behavior on the unit ball. In this section, we expound upon the distinctions between $SO(2, \mathbb{R})$ equivariance and $SL(2, \mathbb{R})$ equivariance for scale-equivariant functions. Importantly, for our equivariant problem of polynomial positivity verification, the two notions are distinct --- this is proven in a following proposition. Therefore, although the normalization procedure we employ slightly breaks equivariance (when there is no normalization), capturing both properties at once is a promising future direction. Moreover, as shown in subsequent plots, our architecture still captures equivariance better than other architectures. 

We begin with the following example, which is our only example in which the two notions coincide. However, as explained below, it is a trivial case.
\begin{prop}[$O(n)$ and $SL(n, \mathbb{R})$ \emph{invariance} are equivalent for scale-invariant problems]
    Let $f:\mathbb{R}^n \rightarrow \mathbb{R} $ be a scale-invariant function, i.e. $f(cx)=cf(x)$, and assume both $O(n)$ and $SL(n, \mathbb{R})$ act on $\mathbb{R}^n$. We assume $G$ acts orthogonally. Then f is $SL(n, \mathbb{R})$-invariant if and only if it is $O(n)$-invariant. 
\end{prop}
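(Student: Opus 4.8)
The plan is to show that, under the scale-invariance hypothesis $f(cx) = cf(x)$, \emph{either} invariance assumption already forces $f \equiv 0$; since the zero function is invariant under every linear group, the two notions then coincide, and the asserted biconditional holds for the trivial reason that neither side admits a nonzero example. I would therefore reduce the ``if and only if'' to the single statement $f \equiv 0$ and prove it separately from each hypothesis.

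The engine of both reductions is one observation: applying the homogeneity relation at $c = -1$ gives $f(-x) = -f(x)$, so $f$ is odd (and $f(0) = 0$). It then suffices to exhibit, in each group, an element sending $x$ to $-x$. For $O(n)$ this is immediate, since $-I \in O(n)$ and $O(n)$-invariance gives $f(-x) = f((-I)x) = f(x)$; combined with oddness this yields $f(x) = -f(x)$, hence $f(x) = 0$. For $SL(n,\mathbb{R})$ with $n \ge 2$, given any $x \ne 0$ I would choose a $2$-plane $\Pi$ containing $x$ and let $R$ be the rotation by $\pi$ in $\Pi$, extended by the identity on $\Pi^{\perp}$; then $\det R = 1$, so $R \in SO(n) \subseteq SL(n,\mathbb{R})$, and $Rx = -x$, whence $SL(n,\mathbb{R})$-invariance again gives $f(-x) = f(x)$ and so $f(x) = 0$. (Alternatively, one can invoke transitivity of $SL(n,\mathbb{R})$ on $\mathbb{R}^n \setminus \{0\}$ to make $f$ constant off the origin, and then use homogeneity to force that constant to vanish.) Each hypothesis therefore independently implies $f \equiv 0$, and since $0$ is trivially both $O(n)$- and $SL(n,\mathbb{R})$-invariant, the equivalence follows in both directions.

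The only real subtlety --- and what I expect to be the crux --- is the admissible range of the scalar $c$ in the homogeneity hypothesis. The antipodal argument needs $c = -1$ to be allowed, i.e. $f$ to be genuinely $1$-homogeneous over all of $\mathbb{R}$ (equivalently, odd). If one instead assumes $f(cx) = cf(x)$ only for $c > 0$, the picture changes: $SL(n,\mathbb{R})$-invariance still forces $f \equiv 0$ (via transitivity plus positive homogeneity), but $O(n)$-invariance only pins $f$ down to the radial family $f(x) = k\lVert x \rVert$, so the converse direction genuinely fails for $k \ne 0$. Thus the content worth emphasizing is precisely that permitting negative scalars collapses both symmetry classes to $\{0\}$, which is exactly why this is the degenerate case in which $SO$-type and $SL$-type invariance agree; I would also flag that for $n = 1$ the statement is vacuous on the $SL(1,\mathbb{R}) = \{1\}$ side, and the reduction then runs entirely through $O(1) = \{\pm 1\}$.
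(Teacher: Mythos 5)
Your proof is correct for $n \ge 2$, but it takes a genuinely different route from the paper's, and the difference is instructive. The paper's own proof reads ``scale-invariant'' as $f(cx) = f(x)$: it argues that $O(n)$-invariance makes $f$ a function of $\|x\|$ alone, while scale invariance makes it a function of $x/\|x\|$ alone, so $f$ must be constant; constants are invariant under every linear group, and (implicitly) transitivity of $SL(n,\mathbb{R})$ on $\mathbb{R}^n \setminus \{0\}$ handles the reverse direction. You instead take the displayed hypothesis $f(cx) = cf(x)$ literally, extract oddness from $c = -1$, and exhibit an antipodal map inside each group ($-I \in O(n)$, and a $\pi$-rotation of a $2$-plane inside $SO(n) \subseteq SL(n,\mathbb{R})$) to force $f \equiv 0$. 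Both arguments expose the same phenomenon --- the equivalence is degenerate because each pair of hypotheses collapses the function class to triviality --- but they are logically distinct: under the paper's reading the common class is the constants, under yours it is $\{0\}$. Your version has the virtue of being consistent with the formula actually printed in the statement; the paper's proof sentence is not, since a $1$-homogeneous function does \emph{not} depend only on $x/\|x\|$. Your closing caveat is also a genuine addition: if homogeneity is assumed only for $c > 0$, then $f(x) = k\|x\|$ is $O(n)$-invariant but not $SL(n,\mathbb{R})$-invariant, so one direction of the proposition fails outright --- the statement is sensitive to the admissible range of scalars, a point the paper glosses over. One small correction: for $n = 1$ your reduction does not merely become vacuous, the ``only if'' direction is actually false under your reading (e.g.\ $f(x) = x$ is trivially $SL(1,\mathbb{R})$-invariant but not $O(1)$-invariant), so the restriction to $n \ge 2$ should be stated as a hypothesis rather than an aside.
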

\begin{proof}
The statement follows from the simple fact that a function that is both scale-invariant \emph{and} $O(n)$-invariant must be constant. This is because $O(n)$-invariance implies that $f(x)$ depends only on $\|x\|$, while scale invariance implies that $f(x)$ depends only on $\frac{x}{\|x\|}$. 
\end{proof}

The above example is not very satisfying. In the next proposition, we prove that $SO(2, \mathbb{R})$ and $SL(2, \mathbb{R})$ equivariance are distinct for the polynomial positivity problem considered in \Cref{sec:task-maxlogdet} -- even when considering 1-homogeneous functions via input normalization. 
The following proposition states that this is not equivalent to an $SO(2, \mathbb{R})$ transformation of $p$.

\begin{prop}[$SO(2, \mathbb{R})$ \emph{equivariance} is not equivalent to $SL(2, \mathbb{R})$ \emph{equivariance} and scale homogeneity]\label{app:prop_scale_inequivalent}
     There exist $p$ and there exists some element $A \in SL(2, \mathbb{R})$ such that there is no rotation $R \in SO(2, \mathbb{R})$ satisfying 
    \begin{align*}
        \frac{A^{[d]}p}{\|A^{[d]}p\| } = R^{[d]}p.
    \end{align*}
\end{prop}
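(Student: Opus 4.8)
The plan is to establish this existential statement through a single scalar invariant rather than by directly analyzing orbits. Concretely, I would construct a function $\Phi : V(d) \to \R$ that is simultaneously \emph{scale-invariant}, i.e.\ $\Phi(cq) = \Phi(q)$ for all $c \neq 0$, and \emph{$SO(2,\mathbb{R})$-invariant}, i.e.\ $\Phi(R^{[d]}q) = \Phi(q)$ for all $R \in SO(2,\mathbb{R})$, but that is \emph{not} $SL(2,\mathbb{R})$-invariant. Given such a $\Phi$, the proposition follows immediately, and \emph{independently of which norm $\|\cdot\|$ appears in the statement}: the right-hand side always satisfies $\Phi(R^{[d]}p) = \Phi(p)$ by $SO(2,\mathbb{R})$-invariance, while scale-invariance gives $\Phi\big(A^{[d]}p/\|A^{[d]}p\|\big) = \Phi(A^{[d]}p)$. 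It therefore suffices to exhibit one $p$ and one $A \in SL(2,\mathbb{R})$ with $\Phi(A^{[d]}p) \neq \Phi(p)$, since then no rotation $R$ can equate the two sides.

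To build $\Phi$ I would work in the smallest relevant case, degree $2$ (a legitimate instance of the positivity task, taking $d=1$ in the lift). Identify $p = ax^2 + bxy + cy^2$ with its Gram matrix $M_p = \left(\begin{smallmatrix} a & b/2 \\ b/2 & c\end{smallmatrix}\right)$. Under the action $\rho_2$, rotations act on $M_p$ by orthogonal conjugation, whereas a general element of $SL(2,\mathbb{R})$ acts by congruence; hence $\Tr M_p$ and $\Tr(M_p^2)$ are $SO(2,\mathbb{R})$-invariant (this recovers the footnote's observation that $a+c = \Tr M_p$ is a rotation invariant), and I set
\[
\Phi(p) = \frac{(\Tr M_p)^2}{\Tr(M_p^2)},
\]
a ratio of two degree-$2$ invariants, hence scale-invariant. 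Because congruence by a non-orthogonal element of $SL(2,\mathbb{R})$ does not preserve the trace, $\Phi$ is not $SL(2,\mathbb{R})$-invariant. Taking $p = x^2 + y^2$ (so $M_p = I$ and $\Phi(p) = 2$) and $A = \operatorname{diag}(2, \tfrac12)$, a direct computation yields $M_{A^{[2]}p} = \operatorname{diag}(4, \tfrac14)$ and $\Phi(A^{[2]}p) = \tfrac{289}{257} \neq 2$, completing the argument.

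The main conceptual point worth stating carefully is that scale-invariance of $\Phi$ is exactly what neutralizes the normalization in the claim: the otherwise awkward presence of an unspecified norm $\|\cdot\|$ becomes irrelevant, and everything reduces to comparing $\Phi$ on $p$ versus on $A^{[d]}p$. The one genuine subtlety, and the place where a careless argument would fail, is the invariance bookkeeping — one must use the apolar/trace pairing (equivalently, conjugation of the Gram matrix) rather than the naive Euclidean norm on coefficient vectors, since the monomial basis is \emph{not} orthonormal for the induced $SO(2,\mathbb{R})$ action and the plain coefficient norm $a^2+b^2+c^2$ is not rotation-invariant. After fixing the correct invariant, only a one-line computation remains. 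For general even degree the same template applies with $M_p$ replaced by the appropriate symmetric lift, but since the claim is existential, the degree-$2$ witness already suffices.
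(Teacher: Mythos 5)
Your proof is correct, and it takes a genuinely different route from the paper's. You construct a single scalar functional $\Phi(p) = (\Tr M_p)^2/\Tr(M_p^2)$ on Gram matrices that is scale-invariant and $SO(2,\mathbb{R})$-invariant but not $SL(2,\mathbb{R})$-invariant, and then evaluate it on an explicit degree-2 witness; the normalization by the unspecified norm $\|\cdot\|$ is neutralized by scale-invariance, and the computation $\Phi(x^2+y^2)=2$ versus $\Phi(4x^2+\tfrac14 y^2)=289/257$ checks out. The paper instead argues by contradiction: it absorbs the rotation and the normalization into a single matrix $M$ satisfying $p(M^{-1}x)=p(x)$, shows that for $p = x^{2d}+y^{2d}$ there are only finitely many such $M$ (a coefficient-vanishing analysis that in fact requires degree at least $4$, since for $x^2+y^2$ the stabilizer is all of $O(2)$), and then uses a QR-decomposition argument to pick a diagonal $A=\operatorname{diag}(r,1/r)$ for which no $MA^{-1}$ is a multiple of an orthogonal matrix. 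Your argument is shorter, works in the smallest nontrivial degree, and is manifestly independent of which norm appears in the statement; the paper's argument, by contrast, produces witnesses $x^{2d}+y^{2d}$ in every degree $2d\geq 4$ and does not lean on the special feature your witness has --- namely that $x^2+y^2$ is itself rotation-fixed, so $R^{[2]}p = p$ and the right-hand side degenerates to $p$ (indeed, with your witness one could even skip $\Phi$ entirely and just observe that $4x^2+\tfrac14 y^2$ is not a positive multiple of $x^2+y^2$; the invariant $\Phi$ is what would let the same template handle witnesses that are not rotation-fixed, as in your closing remark about general even degree).
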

This proposition is somewhat counterintuitive. If $A$ is $n\times n$, then for every $x \in \mathbb{R}^n$, there exists an orthogonal matrix $R$ depending on $R$ and $x$ such that $\frac{Ax}{\| Ax \|} = Rx$. The reason this case is different is because even though induced matrices are size $(d+1) \times (d+1)$, they only have 4 degrees of freedom. This idea is formalized below.
\begin{proof}
    Suppose for the sake of contradiction that for all $A \in SL(2, \mathbb{R})$, there is some rotation $R \in SO(2, \mathbb{R})$ and $\alpha \in \mathbb{R}$ satisfying
    $$ \frac{A^{[d]}p}{\|A^{[d]}p\| } =R^{[d]}p.$$
    Left-multiplying both sides by $R^{[d]T}$, we obtain
    $$ R^{[d]T}\frac{A^{[d]}p}{\|A^{[d]}p\| } = R^{[d]T}R^{[d]}p = p.$$
    Let $M^{[d]} = \frac{1}{\| A^{[d]}p \|}R^{T^{[d]}}A^{[d]}$. Such an $M$ exists because $(R^TA)^{[d]} = R^{T^{[d]}} A^{[d]}$ as the induced matrix map is a homomorphism, and the map is  homogeneous. So the condition in the equation above is equivalent to $p(M^{-1}x) = p(x)$. For a generic $p$, it seems natural that there are only finitely many $M$ such that $p(M^{-1}x) = p(x)$. In general, this is a system of $d+1$ degree $d$ polynomial equations with 4 variables. For this proposition, however, we only need to show this system has finitely many solutions for a particular $p$.
    
    In particular, we argue that if $p = x^{2d}+y^{2d}$, then there are only finitely many $M$. The terms other than $x^{2d} + y^{2d}$ of $(ax+by)^{2d} + (ex+fy)^{2d}$ would need to vanish. A generic coefficient in the expansion is a constant times $a^ib^{2d-i} + e^{2d-i}f^i$. Any term with even $i$ and $2d-i$ would have no solution over the reals unless $ab = 0$ and $cd = 0$. The first and last expansion terms give us that $a^{2d} + e^{2d} = 1$ and $b^{2d} + f^{2d} = 1$. If for instance $a =b= 0$, then $e = \pm1$ and $f = \pm1$. There are only finitely many cases of which variables are zero to consider.

    Next we argue that $MA^{-1}$ cannot be a multiple of an orthogonal matrix (which will be a contradiction to the equality $M^{[d]} = \frac{1}{\| A^{[d]}p \|}R^{T^{[d]}}A^{[d]}$). There exists some $A\in SL(2, \mathbb{R})$ such that $MA^{-1}$ is not a multiple of an orthogonal matrix for any of these finite number of $M$. The matrix $\begin{pmatrix}r & 0 \\ 0 & 1/r\end{pmatrix}$ is in $SL(2, \mathbb{R})$ for every $r \in \mathbb{R}\setminus \{0\}$. We will choose $A$ so that $A^{-1}$ is of this form. If $M = Q_MR_M$ is the QR decomposition of $M$, and $R_M = \begin{pmatrix}u & v\\0 & w\end{pmatrix}$, then choose $r$ so that $r^4 \neq \frac{v^2 + w^2}{u^2}$ for every $R_M$. Then the norms of $MA^{-1}\pmat{1 \\ 0} = Q_M\pmat{ru \\ 0}$ and $MA^{-1}\pmat{0 \\1} = Q_M\pmat{v/r \\ w/r}$ are different. Since $Q_M$ is orthogonal, $MA^{-1}$ is not a multiple of an orthogonal matrix.
\end{proof}
\begin{remark}
    As a result of the above proposition, $SO(2, \mathbb{R})$ and $SL(2, \mathbb{R})$ \emph{equivariance} are not equivalent for positivity verification. This is because representations of $SL(2, \mathbb{R})$ capture a richer variety of transformations than induced matrices of $O(2)$, even after normalization. 
\end{remark}

Although the proposition above proves that normalization and $SO(2, \mathbb{R})$-equivariance together still do not suffice to capture $SL(2, \mathbb{R})$-equivariance, the following reasoning still captures why we might expect $SO(2, \mathbb{R})$-equivariance to perform well. To approximate the $SL(2, \mathbb{R})$-equivariant, 1-homogeneous function $f$ on an input $x$, first normalize $x$, and then input it to an $SO(2, \mathbb{R})$-equivariant architecture, and finally re-scale by $\|x\|$. (More precisely, the $SO(2, \mathbb{R})$ input representation in the architecture is merely the subrepresentation of the original $SL(2, \mathbb{R})$ input representation.) Although, by the previous proposition, this technique is not $SL(2, \mathbb{R})$-equivariant, it still captures equivariance.

\subsection{$SO(2, \mathbb{R})$ equivariant architecture}\label{app:so2equivarch}
The main development for the SO2Net was making such a network compatible with the input and output types required for the tasks. The input for all tasks was a binary homogeneous polynomial of degree $d$. 

\textbf{Input layer} The first layer maps the input to elements of the representation spaces of $SO(2, \mathbb{R})$. We map the homogeneous polynomial in $(x,y)$ to a homogeneous polynomial in $(\cos \theta, \sin \theta)$ and then compute its Fourier coefficients. 

\textbf{Nonlinearity} In the case of $SO(2, \mathbb{R})$, the tensor product simplifies immensely. Because the elements of the representation spaces of degrees $j$ and $k$ are scalars, their tensor product is simply their product that lives in the representation space of order $j + k$. 

\textbf{Learned linear layer} We are free to add arbitrary linear combinations of scalars that live in the same representation spaces. These weights are learned parameters. Furthermore, we learn an MLP to apply to each set of invariants (Fourier modes of order 0). We send the results of this layer into either another nonlinear layer or into the final layer.

\textbf{Final layer} The final layer is output dependent. In the case of an invariant problem, the final layer should return an element of the representation space of order 0. For the problem of Section~\ref{sec:task-maxlogdet}, we convert these Fourier modes into a sequence of irreps of $SL(2, \mathbb{R})$ and apply the final layer used for the SL2Net. We convert a channel of Fourier modes to an ordinary binary form as follows. Every Fourier term of the form $a\cos(k\theta) + b\sin(k\theta)$ corresponds to a homogeneous polynomial in $\cos(\theta)$ and $\sin(\theta)$ of degree $k$. If $k < d$, where $d$ is the degree of the binary form that we need, we multiply by the necessary power of $1 = \cos^2(\theta) + \sin^2(\theta)$ to lift to a form of degree $d$. Then we reverse-substitute $(x,y)$ for $(\cos \theta, \sin \theta)$. This can be used as input for the last layer of the SL2Net.

The construction of this SO2Net satisfies the requirements of Theorem D.1 of \citet{bogatskiy2020lorentz}, which implies that any equivariant map for our tasks can be uniformly approximated by this network.

\section{Experiments}\label{app:experiments}

\subsection{Hard Example}\label{app:hard_example}
Here, we validate in Figure \ref{fig:hard_example_train_curves} that the SL2Net cannot learn the hard function at the single input datapoint $x^8 + y^8$, whereas the SO2Net and MLP can.
\begin{figure}
    \centering
    \includegraphics[width=0.4\textwidth]{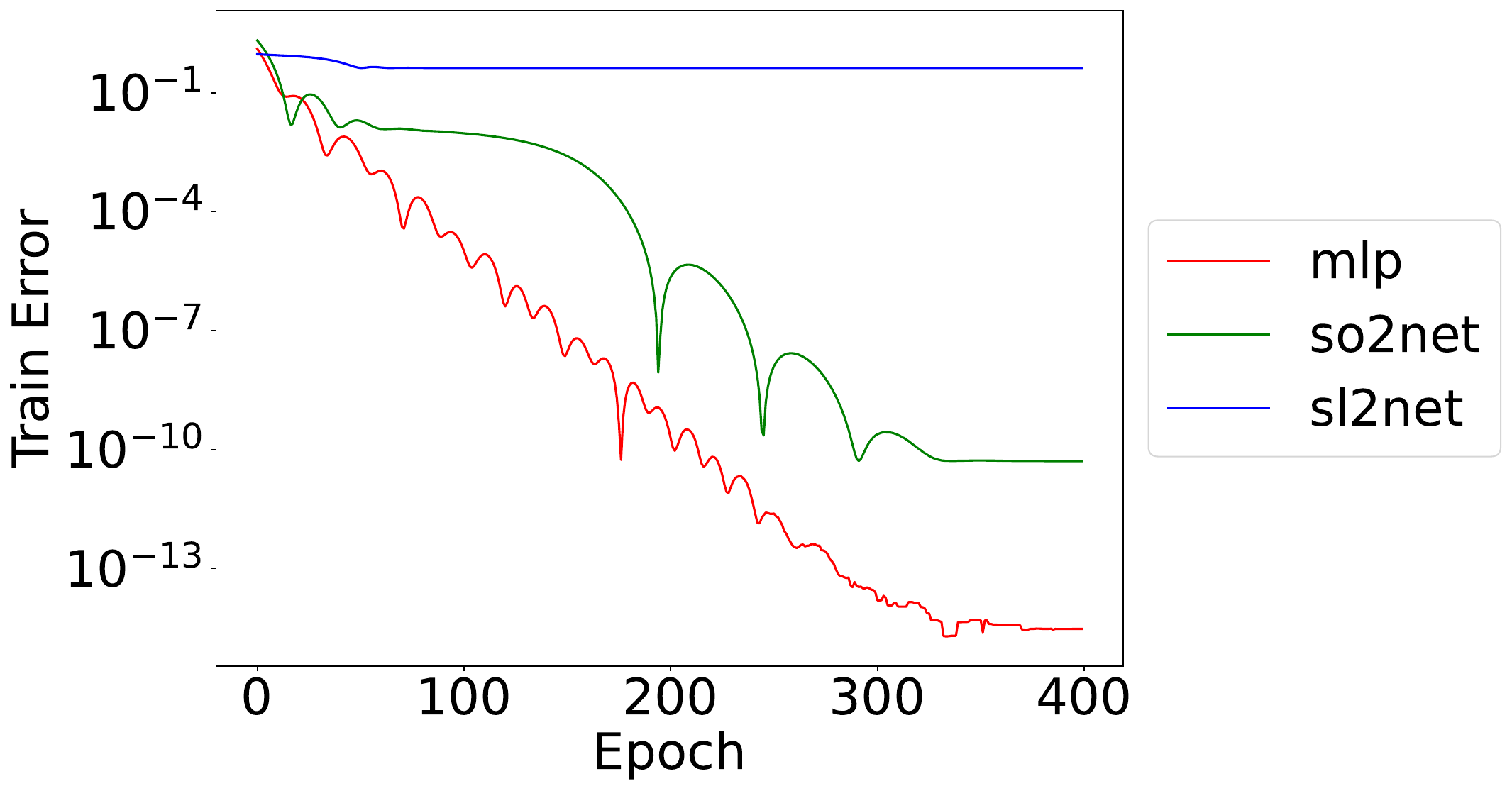}
    \caption{Training curves (MSE loss) for the single input datapoint $x^8 + y^8$, with maximum determinant matrix in \eqref{eqn:analytic-center} as the label to fit. As described in \Cref{subsec:lack_of_universality}, the SL2Net cannot fit this example.}
    \label{fig:hard_example_train_curves}
\end{figure}

\subsection{Experimental Setup}\label{app:experimental_setup}

All experiments were run on Nvidia Volta V100 GPUs, using the AdaM optimizer with learning rate $3\cdot 10^{-4}$. Roughly, training on a single GPU took 15-30 minutes for each MLP, 1.5-2.5 hours for each SL2Net, and 2.5-6 hours for each SO2Net. For runs with data augmentation, augmentations were performed using $10,000$ presaved $2\times 2$ $SL(2, \mathbb{R})$ matrices (and their induced versions) with condition numbers in the specified range. Details of the distribution from which these matrices were generated, as well as from which the random augmentations were applied in our test error plots, can be found in the code at \texttt{github.com/harris-mit/polySL2equiv}. 

\textbf{Data distribution: Random, rotationally symmetric} Random positive degree $d$ forms were generated as follows. A real Wigner matrix $A$ of dimensions $(d+1) \times (d+1)$ is sampled. Each entry is an independently and identically distributed random normal variable with mean 0 and variance 1. Then $p =\vec{x}^{[d]^T}(A^TA + 10^{-8}I) \vec{x}^{[d]}$. The identity perturbation is added to ensure strict positivity. The maximum determinant Gram matrix was computed with \citet{aps2022mosek}. 

\textbf{Data distribution: Delsarte spherical code bounds}
Given a constant $\alpha \in [-1, 1]$ and dimension $d$, a spherical code is a set $X(\alpha) \subseteq \mathbb{R}^d$ of unit vectors such that for all $x,y \in X$, $\langle x, y \rangle \in [-1, \alpha]$. The Delsarte bound gives an upper bound for the cardinality of such a code. The best such bound \citep{Delsarte1977} is a solution to the optimization problem \begin{equation}\label{eqn:Delsarte_true}
\begin{aligned}
    \min_{\{g_k\} \geq 0, g_0 = 1} &\qquad \sum g_k G_k(1)\\
    \textrm{s.t.} &\qquad \sum g_k G_k \leq 0 \text{ on } [-1, \alpha],
    \end{aligned}
\end{equation}
where $G_k$ is the $k$th Gegenbauer polynomial \cite[\S4.7]{szeg1939orthogonal} of weight $\frac{d-2}{2}$ scaled by $\frac{d + 2 \cdot k - 2}{d - 2}$. Therefore, we can define $G_k$ recursively as $G_0(x) = 1$, $G_1(x) = dx$, and 
\begin{equation*}
    \lambda_{k+1}G_{k+1}(x) = xG_k(x) - (1 - \lambda_{k-1})G_{k-1}(x).
\end{equation*}
If the optimal polynomial is $p(x) = \sum_k g_k G_k(x)$, we know that $q(x) = -(x^2 + 1)^d p(\frac{-x^2 + \alpha}{x^2 + 1}) \geq 0$ for all $x \in \mathbb{R}$. For our experiments we fix $d = k_\text{max} = 3$ and sample $\alpha$ uniformly at random in $[-1,1]$. Each of these $\alpha$ gives another polynomial $q(x)$ after solving \eqref{eqn:Delsarte_true}. We add $10^{-4} $ so they are strictly positive and then homogenize these results. Our distribution is the collection of such $q(x)$. 

\textbf{Positivity Verification Setup} We used $5,000$ training examples, $500$ validation examples, and $500$ test examples. Experiments were trained for 700 epochs across 4 random seeds. We used the hyperparameters shown in Table \ref{tab:hyperparams_max_det}, which were chosen heuristically by comparing validation errors across a small number ($<20$) of runs.
\begin{table}[]
    \centering
    \begin{tabular}{c|c|c}
    Architecture & Hyperparameters & Parameters \\
    \hline
        MLP & hidden layers 100-1000 (dimension 100, then 1000) & 117,816 \\
        SL2Net & 5 layers, 50  channels, max internal deg. 12, invariant MLP 10-10 & 887,771\\
        SO2Net & 3 layers, 10  channels, max internal deg. 12, invariant MLP 10-10 & 58,110\\
    \end{tabular}
    \caption{Architecture hyperparameters for the max determinant experiment on random rotationally-symmetric data. For multi-layer perceptron (MLP) architectures, ``$x$-$y$'' indicates $x$ hidden units, followed by $y$ hidden units, etc.}
    \label{tab:hyperparams_max_det}
\end{table} 

\begin{table}[]
    \centering
    \begin{tabular}{c|c|c}
    Architecture & Hyperparameters & Parameters \\
    \hline
        MLP & hidden layers 100-1000 (dimension 100, then 1000) & 117816 \\
        SL2Net & 3 layers, 20  channels, max internal deg. 7, invariant MLP 100-1000 & 16021\\
        SO2Net & 3 layers, 20  channels, max internal deg. 7, invariant MLP 100-1000 & 76070\\
    \end{tabular}
    \caption{Architecture hyperparameters for the max determinant experiment on Delsarte spherical code data. For multi-layer perceptron (MLP) architectures, ``$x$-$y$'' indicates $x$ hidden units, followed by $y$ hidden units, etc.}
    \label{tab:hyperparams_max_det}
\end{table} 

\textbf{Minimization Setup} For the purpose of computing the minimum of an inhomogeneous polynomial, the polynomial was generated as follows. Let $m$ be sampled from a standard normal distribution. Let $(x_0,y_0)$ each be the result of sampling independently from a standard normal and taking the absolute value. If $a,b$ are sampled independently from a uniform distribution on $[0,1]$, then let $(x_1, y_1) = ((2a-1)x_0, (2b-1)y_0)$. Then multiply a collection of polynomials from the five quadratics in $\{((x\pm x_0)^2 + (y \pm y_0)^2), ((x-x_1)^2 + (y-y_1)^2)\}$ to get a degree $d-2$ polynomial. Add together every product from this collection that has degree $d-2$ to get the polynomial $p$ and then return $p \cdot ((x-x_1)^2 + (y-y_1)^2) + m$. The minimum of this polynomial is guaranteed to be $m$ and occur at $(x_1, y_1)$. After expanding the resulting polynomial, we can separate the polynomial into binary forms of degrees $0, \dots, d$ and input all forms simultaneously into the first layer of the neural network. We used $5,000$ training examples, $100$ validation examples, and $100$ test examples.

Experiments were trained for 400 epochs across 2 random seeds. We used the hyperparameters shown in Table \ref{tab:hyperparams_min_poly} which were chosen heuristically by comparing validation errors across a small number ($<20$) of runs.
\begin{table}[]
    \centering
    \begin{tabular}{c|c|c}
    Architecture & Hyperparameters & Parameters\\
    \hline
        MLP & hidden layers 100-1000 (dimension 100, then 1000) & 104,901\\
        SL2Net & 3 layers, 20  channels, max internal deg. 12, invariant MLP 10-10 & 163,628\\
        SO2Net & 3 layers, 10 channels, max internal deg. 12, invariant MLP 10-10 & 48,930\\
    \end{tabular}
    \caption{Architecture hyperparameters for the minimization experiment. For multi-layer perceptron (MLP) architectures, ``$x$-$y$'' indicates $x$ hidden units, followed by $y$ hidden units, etc.}
    \label{tab:hyperparams_min_poly}
\end{table}

\textbf{Normalization}
As noted in the previous section, we normalize input polynomials via $p \mapsto \frac{p}{\|p\|}$, where $\|p\|$ is equal to the $L_2$ norm of the polynomial's coefficients, in a monomial basis. Since our problems are 1-homogeneous, we rescale the outputs via $\|p\|$. This is done for all architectures we compare; therefore, all methods are scale equivariant (or in particular, 1-homogeneous) during training and validation.

\subsection{Polynomial Minimization} \label{app:polynomial_minimization}

\begin{figure}
    \includegraphics[width=0.7\linewidth]{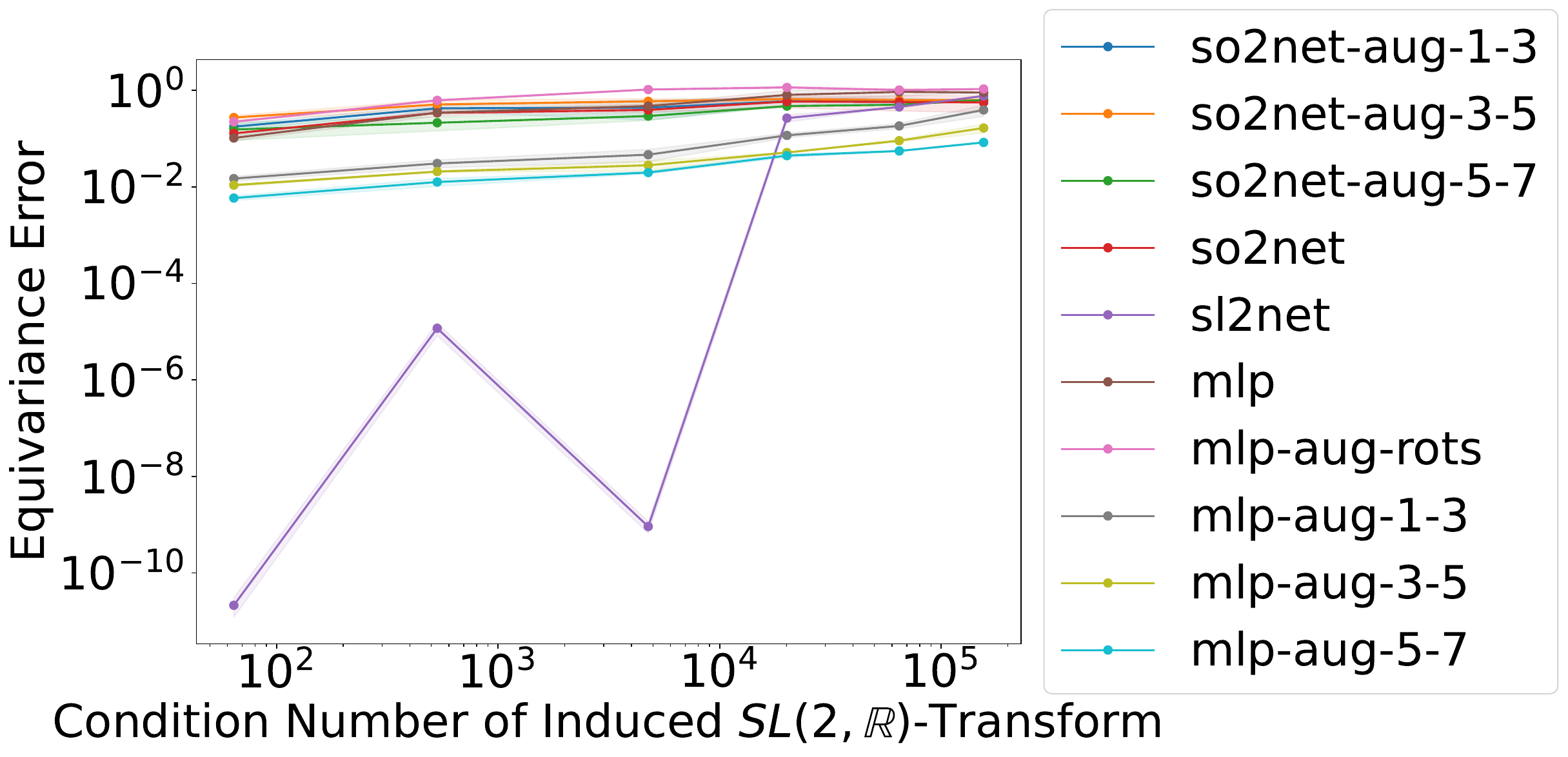}
    \caption{Equivariance errors for different architectures on the polynomial minimization problem, averaged over two random seeds with a standard-deviation error bar. \label{fig:equiv_errors_min_poly}}
\end{figure}

\begin{figure}[htbp]
    \centering
\begin{subfigure}{.5\textwidth}\includegraphics[width=\textwidth]{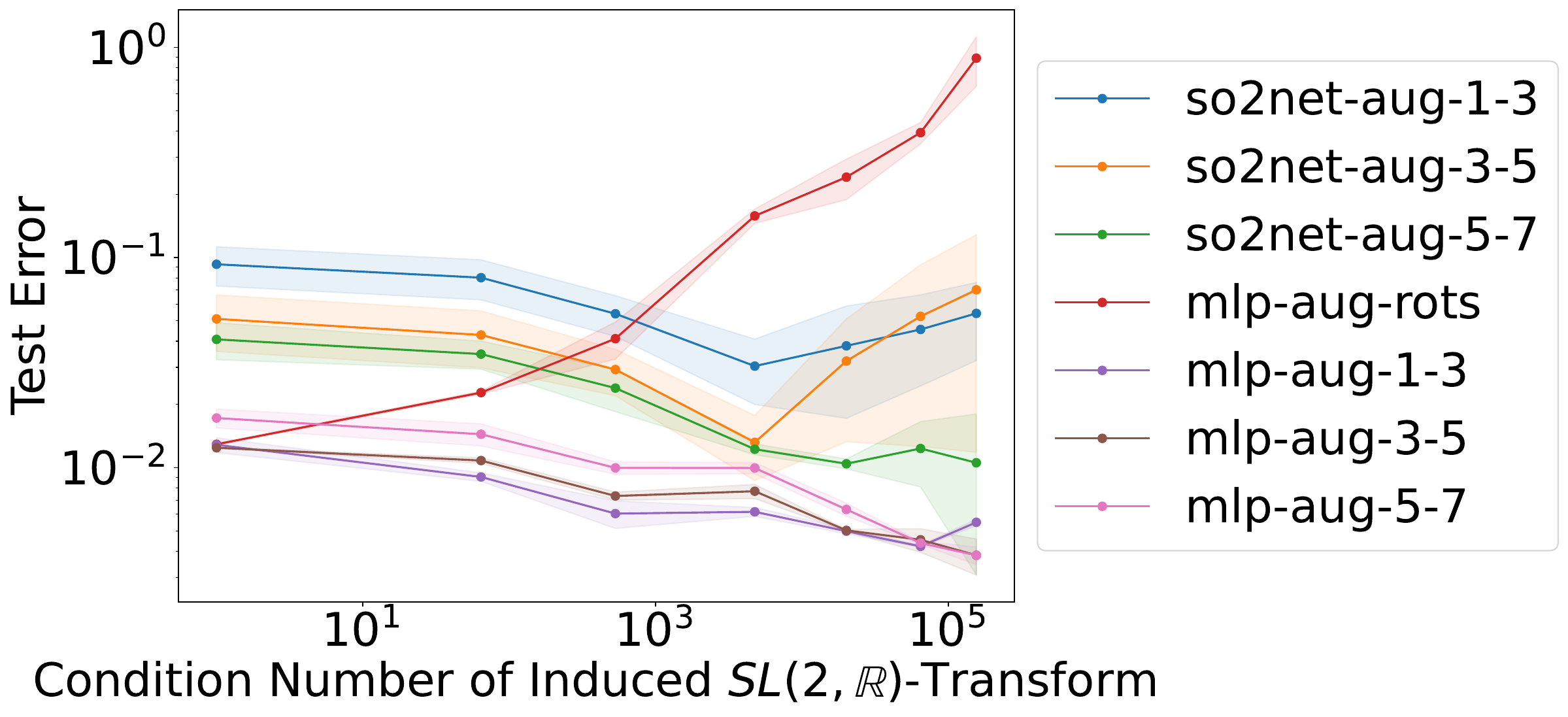}
    \end{subfigure}
\hfill
    \begin{subfigure}{.43\textwidth}\includegraphics[width=\textwidth]{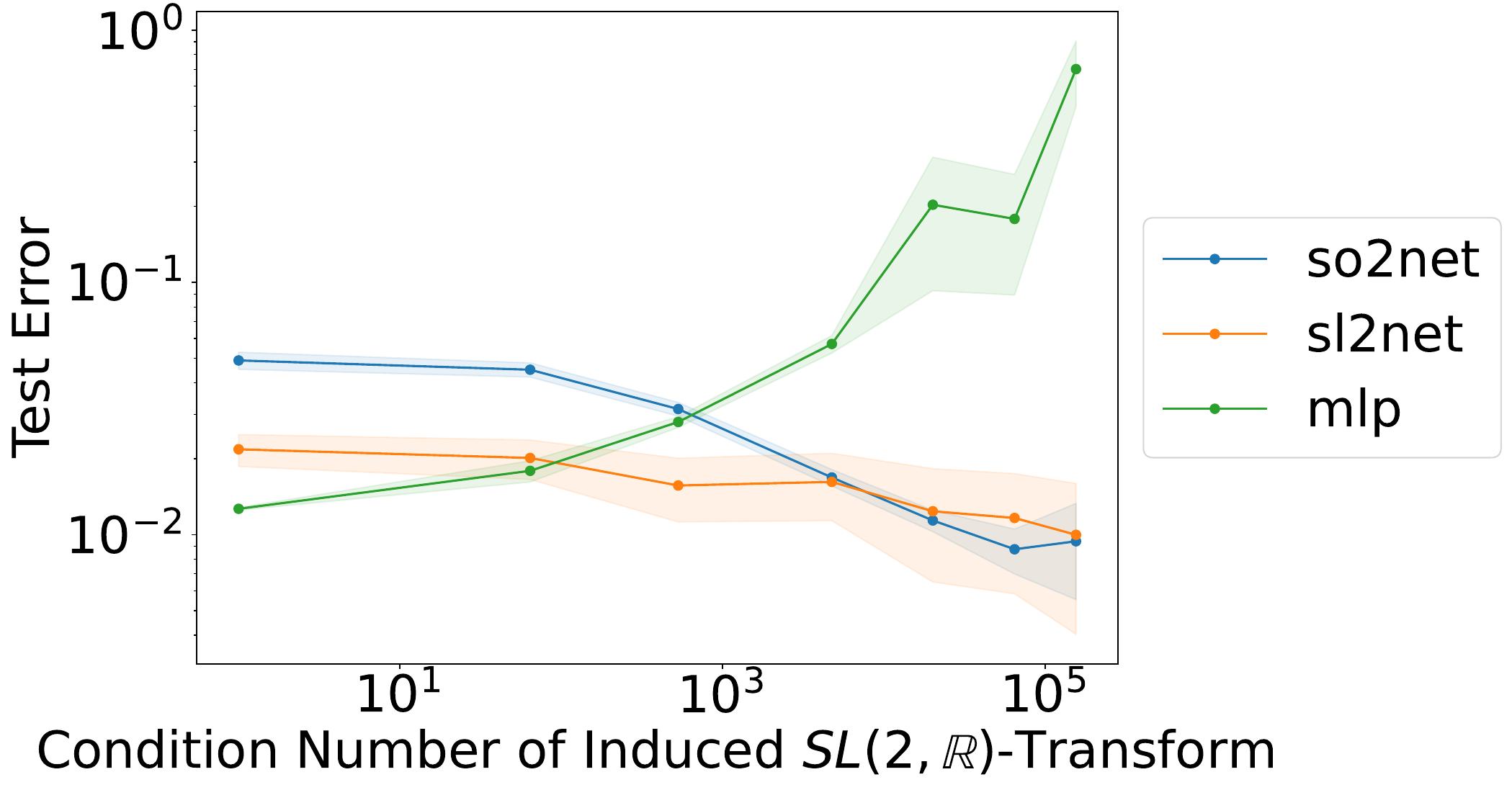} 
    \end{subfigure}
    \caption{Normalized test errors for different architectures on the polynomial minimization problem, averaged over two random seeds with a standard-deviation error bar. \label{fig:test_errors_min_poly}} 
\end{figure}

In this section, we repeat our experiments for an alternative $SL(2, \mathbb{R})$-equivariant problem: polynomial minimization. If a bivariate  polynomial (not necessarily homogeneous) has a unique minimum, then that global minimum is invariant to any invertible change of coordinates, including \slr~.
The $SL(2, \mathbb{R})$-equivariant architecture is amenable to this problem by the following adaptations.
\begin{itemize}
    \item The input to the first layer is a sequence of forms of different degrees. The degree $k$ form are all the terms of degree $k$ of the original polynomial.
    \item The last layer returns an element of the degree 0 representation space -- an invariant scalar.
\end{itemize}
The changes to SO2Net are analogous:
\begin{itemize}
    \item The input to the network is a sequence of forms of different degrees. We implement this as several input channels. We interpret each as a polynomial in $\cos(\theta)$ and $\sin(\theta)$ and compute the Fourier transform. We zero-pad the Fourier coefficients so that all channels have the same number of Fourier coefficients, and this collection of channels is the input to the first nonlinear layer. 
    \item The output returns the 0th Fourier mode of the last layer, again an invariant scalar.
\end{itemize}

We again compare various architectures for applying machine learning to this problem. As shown in Figure \ref{fig:equiv_errors_min_poly}, the \slr-equivariant architecture is far more equivariant than any other architecture, with or without augmentations. However, as shown in Figure \ref{fig:test_errors_min_poly}, the so2 and sl2 equivariant architectures have an advantage over an MLP only for very ill-conditioned $SL(2, \mathbb{R})$ transformations. Moreover, MLPs with $SL(2, \mathbb{R})$-augmentations perform the best in terms of test error. The takeaway is similarly that the use-case should inform which model type is preferable.

\subsection{Equivariance Tests for Positivity Verification}\label{app:equivariance}

\paragraph{Equivariance Error}
The equivariance error of the models in Table~\ref{tab:model_labels} after transformation of the input by random $A \in SL(2,\mathbb{R})$ with average condition number of $A^{[d]}$ given on the horizontal axis are reported in Figure~\ref{fig:equiv_error_plot}. For $p$ a polynomial, $y$ the training label (maximum determinant matrix), and $\mathcal{N}(p)$ the network's output on $p$, the equivariance error was calculated as $$\frac{\|A^{[d']T}\mathcal{N}(p)A^{[d']}-\mathcal{N}(A^{[d]}p)\|}{ \|A^{[d']T}yA^{[d']}\|},$$ averaged over all $p$ in the test set, where the norm is the Frobenius norm.

As shown, the SL2Net architecture was the most equivariant at every transformation level. Almost all of the MLPs also learned some level of equivariance. 
\begin{figure}[htbp]
    \centering
\begin{subfigure}{.5\textwidth}\includegraphics[width=\textwidth]{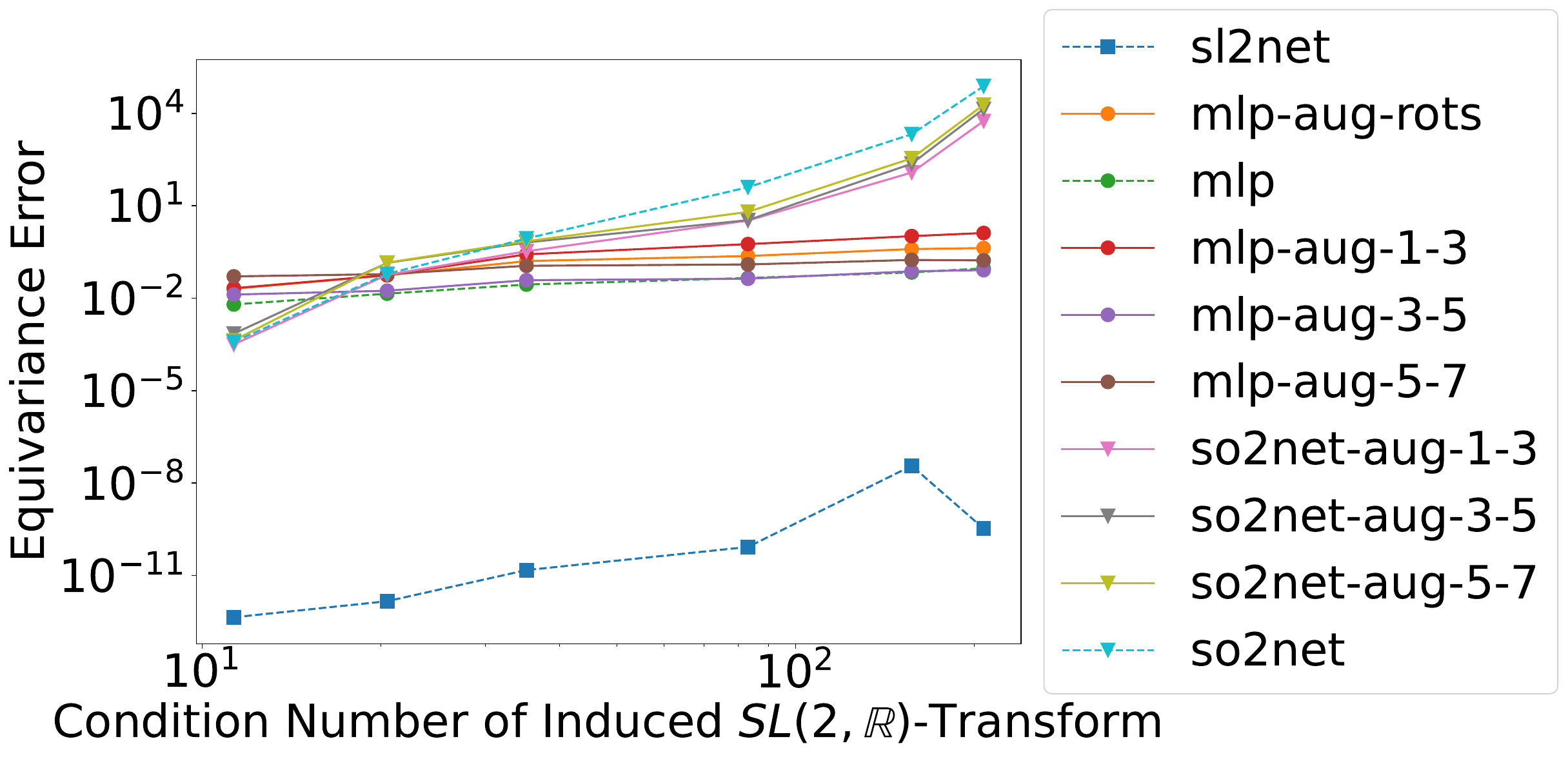} 
    \end{subfigure}
\hfill
    \begin{subfigure}{.43\textwidth}\includegraphics[width=\textwidth]{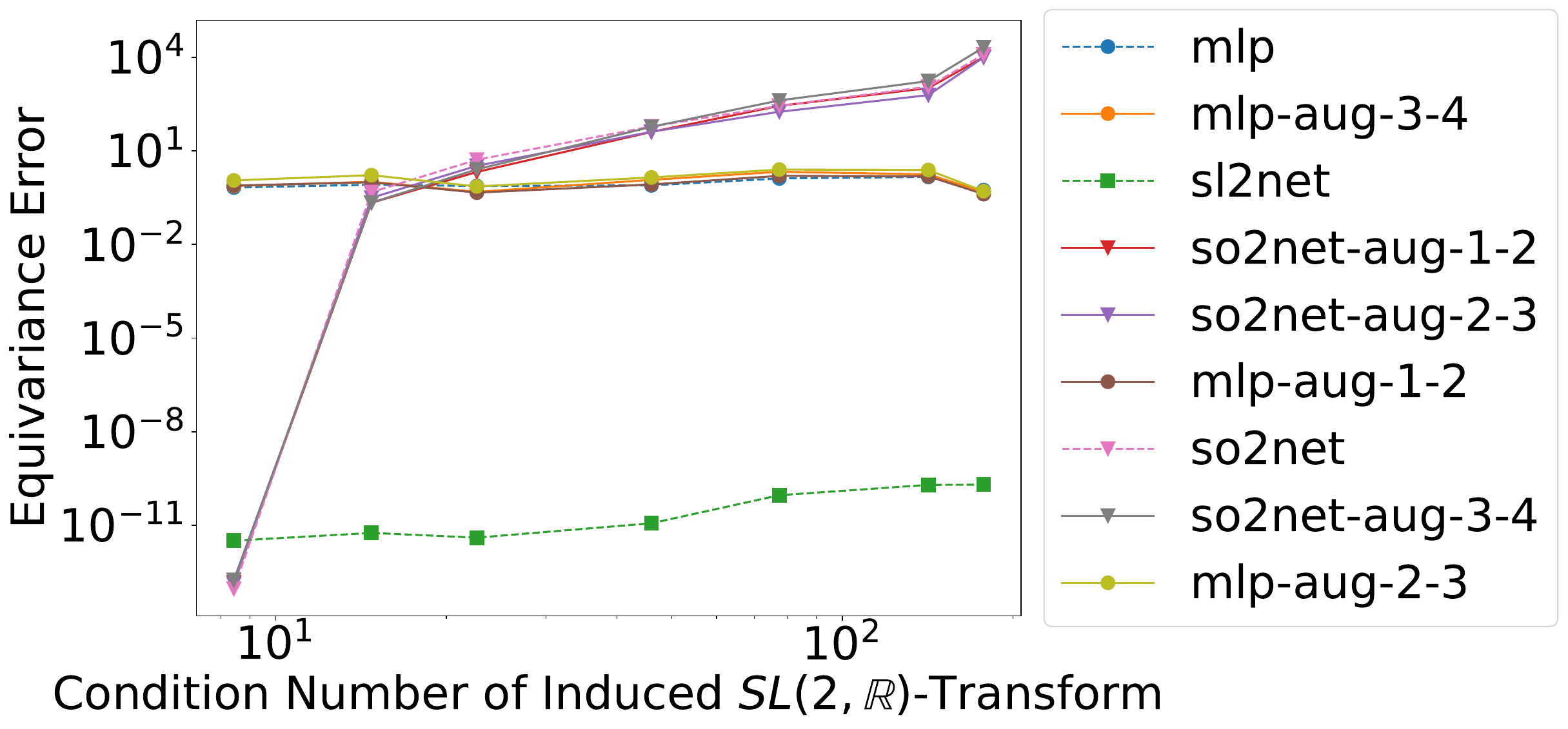} 
    \end{subfigure}
    \caption{Equivariance error for positivity verification}\label{fig:equiv_error_plot}
\end{figure}
\paragraph{Impact of architecture}

\begin{figure}[t!]
    \centering\includegraphics[width=0.65\linewidth]{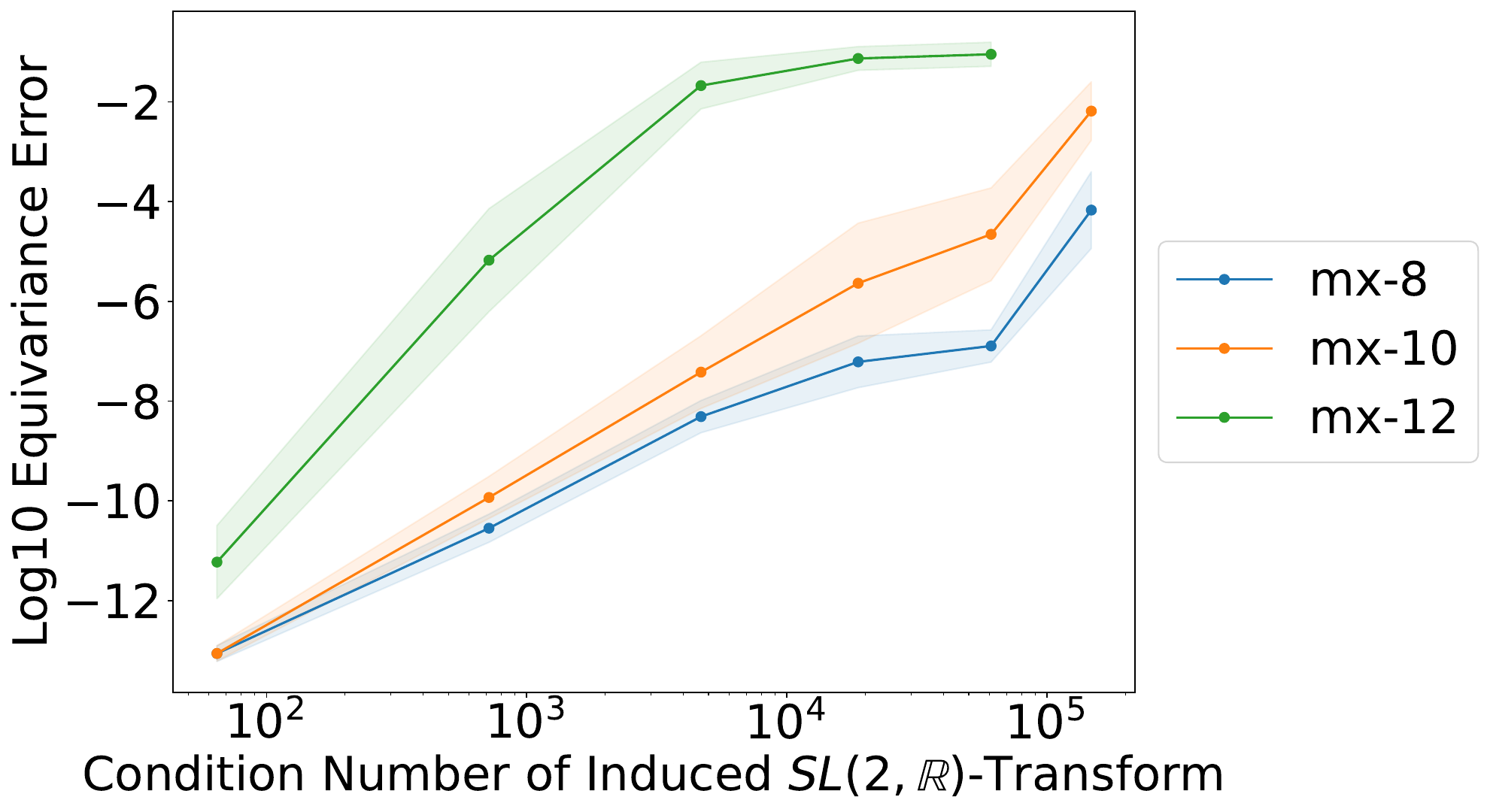}
    \caption{The impact of the $SL(2, \mathbb{R})$-equivariant architecture on equivariance error. Architectures with channels in \{10, 30, 50\}, layers in \{3, 4, 5\}, and maximum internal irrep degree stored in \{8, 10, 12\} (indicated by \texttt{mx-8}, \texttt{mx-10}, etc) are trained for a short time (30 epochs). The log equivariance error is then averaged across all architectures with the same maximum internal degree. As shown, architectures with a lower internal degree tend to be more equivariant. Another takeaway is that the variation in equivariance due to the other hyperparameters other than maximum irrep degree is minimal -- changing the maximum irrep degree has the biggest affect on equivariance.} \label{fig:equiv_vs_arch}
\end{figure}

In Figure \ref{fig:equiv_vs_arch}, we also study the impact of the equivariant architecture hyperparameters on the equivariance error of the network. We find that the higher degree the internal irreps, the further the equivariance error is from numerical precision. In the following section, we hypothesize an explanation for this behavior.

\section{Practical Considerations for \slr}\label{app:conditioning_considerations}

\subsection{Distribution Shift}\label{subsec:distributional_view} 
There are several crucial difficulties unique to non-compact groups, more fundamental even than the equivariant architectural design question. All of them arise due to the following essential fact: a non-compact group $G$ does not have a finite Haar measure. Recall that the Haar measure $\mu$ is a left-invariant measure over a compact group, i.e. for any subset $S \subseteq G$, $\mu(S)=\mu(gS) \: \forall g \in G$. Since a non-compact $G$ does not have a Haar measure integrating to $1$, for any finite measure $\mu$ on $G$, there exists a subset $S \subseteq G$ and $g \in G$ such that $\mu(S) \neq \mu(gS)$. A similar statement can be made about a space $\mathcal{X}$ on which $G$ acts; for most group actions (and all those we consider), 
there exists a subset $S \subseteq \mathcal{X}$ and $g \in G$ such that $\mu(S) \neq \mu(gS)$. 

What is the implication of this for machine learning? If $\mathcal{X}$ is our data space, then the data distribution is a distribution $p$ over $\mathcal{X}$. For compact groups, it is nearly always assumed in prior work that $p$ is invariant under $G$: in other words, any datapoint in an orbit is just as likely as any other datapoint in that same orbit. For example, a molecule in space is equally likely to appear in a dataset in any 3D orientation. When the data distribution is itself invariant, augmenting a data point $x$ and its label $y$ as $(gx,gy)$ (where $g$ is drawn according to the Haar measure) effectively generates a fresh sample from the data distribution, conditioned on the particular orbit. Both data augmentation and equivariance are therefore eminently reasonable from the perspective of sample complexity, as all points in a given orbit are equally likely (and, for $G$ a subgroup of the orthogonal group $O(n)$, equally ``important'': $\|gx\|=\|x\|$).
Note that for the special case of the translation group, even though it is non-compact, it is trivial to emulate this ``nice'' behavior by centering the input data as a pre-processing step. This provides a simple example of frame-averaging \citep{puny2021frames}, but an analog is not known for \slr. 

In contrast, any non-trivial probability distribution over $\mathcal{X}$ is not the same as $g\mathcal{X}$ for $g \in G$ when $G$ is non-compact. \textbf{Therefore, data augmentation induces a distribution shift.} In this sense, non-compact equivariance is inextricably linked to the notion of ``extrinsic equivariance,'' introduced by \citet{wang2022surprising}, in which group transformations change the support of the data distribution. As noted in \citet{wang2023general}, extrinsic equivariance can be helpful \emph{or} harmful, and we still lack a good framework for understanding when either case will hold. In sum, we view \slr-equivariance as aiding in out-of-distribution generalization, rather than sample complexity, where higher condition numbers induce more dramatic distribution shifts. This is an important distinction, and one that will arise when we compare an $SO(2,\mathbb{R})$-equivariant architecture on an $SO(2,\mathbb{R})$-invariant distribution
with a \slr-equivariant architecture.

\subsection{Induced Condition Number Study }\label{app:induced_condition_number}
First, we demonstrate in Figure \ref{fig:induced_condition_numbers} the exponential relationship between the condition number (denoted by $\kappa$) of $A$ and that of $A^{[d]}$. Each line is a randomly generated matrix $A \in SL(2, \mathbb{R})$, and demonstrates a linear relationship between $d$ and $\log(\kappa(A^{[d]})$. This is likely a relationship that one can prove, but we defer such a theorem to future work. 

\begin{figure}[htbp]
    \centering
\begin{subfigure}{.5\textwidth}\includegraphics[width=\textwidth]{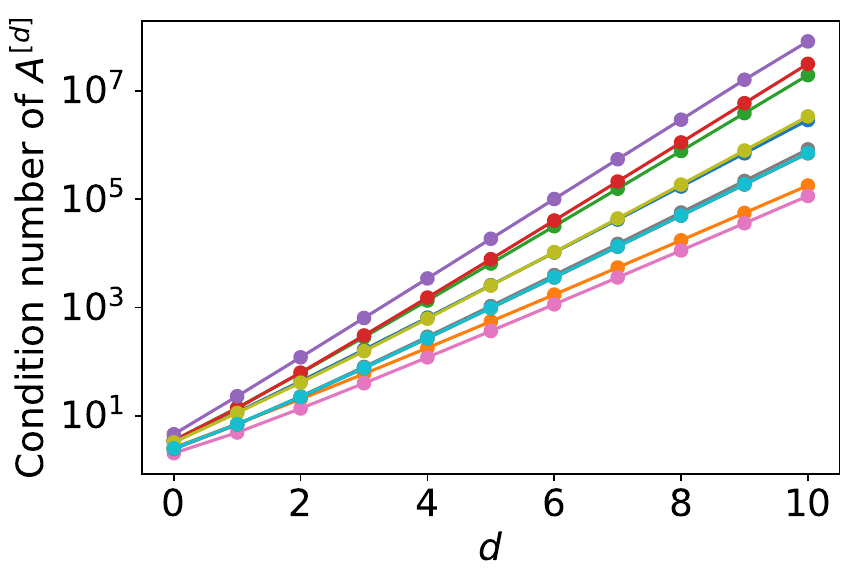}
    \end{subfigure}
    \caption{The condition number of $A^{[d]}$ as a function of $d$ for 10 randomly generated matrices $A \in SL(2, \mathbb{R})$. Here, each colored line corresponds to a different sampled matrix.}\label{fig:induced_condition_numbers}
\end{figure}

\subsection{Condition Number Implications}\label{app:conditioning}
Past work on noncompact Lie groups has largely focused on how to design linear equivariant layers, but not much on the practical difficulties that arise from working with a group of non-orthogonal matrices. Here, we expound upon some of these difficulties, in the hopes that it may be useful for future practitioners working with groups which are not subgroups of the orthogonal group.

One gnarly consequence of the non-unitarity of a given \slr~representation is that data norms vary wildly along an orbit: if $g \notin O(2)$, $\|gx\|\neq\|x\|$. This makes the choice of loss function important, and something one must bear in mind when assessing out of distribution generalization. 

Unlike elements of an orthogonal group, the elements of $SL(2, \mathbb{R})$ can be arbitrarily poorly conditioned.
As shown in the previous subsection, Appendix \ref{app:induced_condition_number}, even if $A \in SL(2,\mathbb{R})$ has a moderate condition number (e.g. less than 10), $A^{[10]}$ may have a condition number several orders of magnitude larger. This affects various aspects of training and testing. First, when $A^{[d]}$ is poorly conditioned, it means that computed values of $p(Ax)$ and therefore $M(p(Ax))$ may be untrustworthy. This means that there is a limit to how much data augmentation can be done, and how reasonable it is to test on transformed datasets.

Our architecture is designed to be fully equivariant. This means that it should handle arbitrary transformations of the input data. When testing equivariance of our model, it is apparent that the errors from poorly conditioned transformations compound with more Clebsch-Gordan layers and higher irrep degrees. High degree polynomials in the monomial basis are poorly conditioned, and low degree polynomials may have numerical issues when we repeatedly multiply them together. This means there is a practical limitation with how big the networks can be before the internal computations become unstable.
This is consistent with Figure \ref{fig:equiv_vs_arch} from the previous section: networks with higher degree internal irrep activations are slightly less equivariant than those with lower degrees, as the architecture --- by nature of its very equivariance --- is constrained to apply an ill-conditioned matrix to its internal activations, as a result of the representation matrix applied to the input.

One could hope to choose a different basis for each vector space of homogeneous polynomials, such that the resulting induced representation is better-conditioned. This is an open problem for future work, and indeed may not always be possible for the entire group, as shown in the following general theorem about non-unitary representations.

\begin{prop}\label{prop:no_well_cond_basis}
    Let $G$ be a group, and $\rho:G \rightarrow GL(V)$ a representation of $G$ with associated $n$-dimensional vector space of $V$. A basis change $U \in O(n)$ of $V$ induces a mapping\footnote{We overload terminology somewhat, and refer to this as a basis change of the representation itself.} from $\rho$ to $\rho'$, where $\rho'(g)x = U\rho(g)U^Tx$. Then, there is no unitary basis change of $\rho$ under which $\max_{g \in G} \kappa(\rho(g))$ changes. 
\end{prop}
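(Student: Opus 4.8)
The plan is to prove the stronger pointwise statement that $\kappa(\rho'(g)) = \kappa(\rho(g))$ for every individual $g \in G$; taking the supremum over $g$ then immediately yields $\max_{g \in G} \kappa(\rho(g)) = \max_{g \in G} \kappa(\rho'(g))$, regardless of whether this quantity is finite (as it need not be for non-compact $G$). Recall that the condition number is the ratio $\kappa(M) = \sigma_{\max}(M)/\sigma_{\min}(M)$ of the largest to smallest singular values. Hence it suffices to show that $\rho'(g) = U\rho(g)U^T$ and $\rho(g)$ have the same singular values.

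The key step is the standard fact that singular values are invariant under left- and right-multiplication by orthogonal matrices. To see this, write the singular value decomposition $\rho(g) = P\Sigma Q^T$, where $P, Q \in O(n)$ and $\Sigma$ is the diagonal matrix of singular values. Then
\begin{equation*}
\rho'(g) = U\rho(g)U^T = (UP)\,\Sigma\,(UQ)^T.
\end{equation*}
Since $U, P, Q$ are all orthogonal, $UP$ and $UQ$ are orthogonal as well, so this expression is itself a valid singular value decomposition of $\rho'(g)$ with the same diagonal factor $\Sigma$. By uniqueness of the singular values, $\rho'(g)$ and $\rho(g)$ share the identical multiset of singular values; in particular $\sigma_{\max}$ and $\sigma_{\min}$ agree, giving $\kappa(\rho'(g)) = \kappa(\rho(g))$.

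There is no serious obstacle here: the result is essentially a one-line consequence of the orthogonal invariance of singular values. The only subtlety to watch is the book-keeping that the unitary basis change acts by \emph{conjugation} $\rho(g) \mapsto U\rho(g)U^T$, so that both flanking factors are orthogonal and the spectrum of singular values is untouched; a one-sided or non-orthogonal transformation would in general change $\Sigma$. Indeed, it is worth remarking that the statement fails for a general $U \in GL(n)$, which is exactly why the surrounding discussion frames finding a better-conditioned basis as a genuinely open problem: only unitary reparametrizations leave the worst-case condition number fixed, and these are precisely the changes that provably cannot reduce it.
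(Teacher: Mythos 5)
Your proof is correct, and it takes a genuinely different --- and stronger --- route than the paper's. The paper's proof is group-theoretic: it assumes (only inside the proof, and echoed in the following remark as ``the criterion'' of the proposition) that $O(n) \subseteq \mathrm{Im}(\rho)$, picks $h$ with $\rho(h) = U$, and observes that $U\rho(g)U^T = \rho(hgh^{-1})$, so conjugation by $U$ merely permutes the set $\{\rho(g) : g \in G\}$ and the worst-case condition number over the group is unchanged. That argument preserves only the supremum, not each individual $\kappa(\rho(g))$, and it genuinely needs the image hypothesis. Your argument is pure linear algebra: orthogonal conjugation $M \mapsto UMU^T$ preserves the entire multiset of singular values (via the SVD bookkeeping $P\Sigma Q^T \mapsto (UP)\Sigma(UQ)^T$), hence preserves $\kappa$ pointwise at every $g$, with no hypothesis on $\rho$ whatsoever. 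This buys two things. First, it proves the proposition as literally stated --- no image condition appears in the statement --- whereas the paper's proof establishes it only under that unstated extra hypothesis. Second, it shows that the paper's follow-up discussion is moot insofar as it concerns unitary basis changes: the paper suggests that because the condition $O(n) \subseteq \mathrm{Im}(\rho)$ is restrictive (failing, e.g., for the induced representations $A^{[d]}$ with $d > 2$), a better-conditioned basis might still exist, or that one might instead improve a distributional metric $\mathbb{E}_{g\sim\mu}\,\kappa(\rho(g))$. By your pointwise result, every $\kappa(\rho(g))$ is individually unchanged under any $U \in O(n)$, so no functional of the condition numbers --- worst-case or average --- can be improved by an orthogonal basis change for \emph{any} representation. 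Any hope of better conditioning must come from non-orthogonal changes of basis, exactly as your closing remark identifies.
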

\begin{proof}
    We wish to find a basis for the vector space $V$ corresponding to a representation p such that $\max_{g \in G} \kappa(\rho(g))$ is as small as possible. A basis change here corresponds to the mapping $\rho(g) \mapsto U\rho(g)U^T.$ Therefore, want to solve: $\min_{U \in O(n)} \max_{g \in G} \kappa(U\rho(g)U^T)$. If $Im(\rho(g))$ contains $O(n)$, then for any $U \in O(n)$, we can define $h$ as the preimage of $U$: $\rho(h)=U$. Then, $U\rho(g)U^T = \rho(hgh^{-1})$, so $ \max_{g \in G} \kappa(U\rho(g)U^T) =  \max_{g \in G} \kappa(\rho(g))$ is independent of $U$. This implies that no basis is better than any other basis — at least from a worst-case perspective. 
\end{proof}
\begin{remark}
    The standard, two-dimensional (irreducible) representation of $SL(2, \mathbb{R})$ satisfies the criterion of Proposition \ref{prop:no_well_cond_basis}, as $SL(2, \mathbb{R})$ contains $SO(2, \mathbb{R})$. Therefore, there is no basis under which this representation is better-conditioned -- at least, in a worst-case sense over $SL(2, \mathbb{R})$. 
\end{remark}
In spite of Proposition \ref{prop:no_well_cond_basis}, there is still reason to believe a basis may exist that performs well in practice. First, the condition that $O(n) \subseteq Im(p(g))$ is restrictive; for representations like the induced representations $A^{[d]}$ of $G = SL(2, \mathbb{R})$ for $d>2$, this does not hold. (Consider just that there are on the order of $n$ free parameters for elements in $O(n)$ but only 4 for elements of \slr.) Moreover, perhaps we do not care about the metric $\max_{g \in G} \kappa(p(g))$, but rather something distributional: $\mathbb{E}_{g \sim \mu} \kappa(p(g))$. In this case, we could hope to find a basis change that is better-conditioned on  high-probability group elements.

Finally, one practical consideration to do with $SL(2, \mathbb{R})$-equivariance and conditioning is the loss function. Although the $L_2$ norm is unchanged under orthogonal transformations --- so that loss functions are often themselves invariant -- the $L_2$ norm is changed by non-unitary representations, and therefore not invariant under our non-unitary $SL(2, \mathbb{R})$ representations. As shown below, for the positivity verification problem, the normalized loss function we use in our experiments may be distorted by a factor of up to $\kappa^2$. This means that, when we query the loss of an equivariant model on an $SL(2, \mathbb{R})$-transformed datapoint, the loss may vary in accordance with the condition number of the transformation.

\begin{prop}[Variation of Loss Along Orbits]\label{prop:cond_num_loss}Let $p$ be a homogeneous bivariate polynomial. Let $f(p)$ be the true solution as in \eqref{eqn:analytic-center} to the problem in Section~\ref{sec:task-maxlogdet}. Let $\N(p)$ be the equivariant neural network prediction. Then \begin{equation}\epsilon_1 \leq \frac{\Vert \N(p) - f(p) \Vert_\mathrm{Fro}}{\Vert f(p)\Vert_\mathrm{Fro}} \leq \epsilon_2 \qquad \Longrightarrow \qquad \frac{\epsilon_1}{\kappa^2} \leq 
\frac{\Vert \N(gp) - f(gp) \Vert_\mathrm{Fro}}{\Vert f(gp)\Vert_\mathrm{Fro}} \leq \kappa^2 \epsilon_2,\end{equation} where $\kappa$ is the condition number of $g^{[d]}$.\end{prop}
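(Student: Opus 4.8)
The plan is to reduce the statement to a pair of elementary two-sided bounds describing how the Frobenius norm of a matrix changes under a congruence transformation $M \mapsto B^T M B$, where $B = g^{[d]}$ is the induced matrix. The only structural inputs I need are that both the target $f$ and the network $\N$ are \emph{exactly} $SL(2,\mathbb{R})$-equivariant, so that they transform identically under the action of $g$.

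First I would invoke equivariance. Writing $B = g^{[d]}$, the equivariance property of $f$ established in \Cref{sec:task-maxlogdet} gives $f(gp) = B^T f(p)\, B$, and since the SL2Net is exactly equivariant, $\N(gp) = B^T \N(p)\, B$. Subtracting, the error matrix transforms by the same congruence: $\N(gp) - f(gp) = B^T\big(\N(p) - f(p)\big)B$. Thus both the numerator and the denominator of the transformed relative error are congruence images of the corresponding quantities at $p$, and the whole problem becomes one of tracking a single congruence.

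Second I would establish the key lemma: for any matrix $M$ and invertible $B$, $\sigma_{\min}(B)^2 \,\|M\|_\mathrm{Fro} \le \|B^T M B\|_\mathrm{Fro} \le \sigma_{\max}(B)^2 \,\|M\|_\mathrm{Fro}$. The upper bound follows from submultiplicativity of the Frobenius norm against the spectral norm, $\|B^T M B\|_\mathrm{Fro} \le \|B\|_2 \,\|M\|_\mathrm{Fro}\, \|B\|_2$, together with $\|B\|_2 = \sigma_{\max}(B)$. The lower bound follows by applying the upper bound to the identity $M = B^{-T}(B^T M B)B^{-1}$ and using $\|B^{-1}\|_2 = 1/\sigma_{\min}(B)$.

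Finally I would assemble the ratio: applying the lemma with $M = \N(p) - f(p)$ in the numerator and $M = f(p)$ in the denominator and dividing, the extremal singular values combine into the condition number, so the worst case multiplies the original relative error by $\sigma_{\max}(B)^2/\sigma_{\min}(B)^2 = \kappa^2$ and the best case divides it by $\kappa^2$, where $\kappa = \kappa(g^{[d]})$. (This is insensitive to whether the congruence uses $g^{[d]}$ or its inverse, since $\kappa(B) = \kappa(B^{-1})$, so the convention for the action is irrelevant.) Substituting the hypotheses $\epsilon_1 \le \|\N(p)-f(p)\|_\mathrm{Fro}/\|f(p)\|_\mathrm{Fro} \le \epsilon_2$ then yields the claimed $\epsilon_1/\kappa^2$ and $\kappa^2\epsilon_2$ bounds. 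I do not anticipate a genuine obstacle; the only points requiring care are the lower bound in the congruence lemma (which must be routed through $B^{-1}$ rather than bounded directly) and the explicit use of \emph{exact} equivariance of $\N$, since the argument degrades if the network is only approximately equivariant.
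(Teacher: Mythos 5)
Your proposal is correct and takes essentially the same approach as the paper: exact equivariance of both $f$ and $\mathcal{N}$ reduces the transformed relative error to a ratio of congruence images of the original error and label, which is then bounded on both sides by the extremal singular values of $g^{[d]}$, combining into the factor $\kappa^2$. The only difference is in how the elementary congruence-norm inequality is justified --- the paper vectorizes and uses $\mathrm{vec}(BMB^T) = (B \otimes B)\mathrm{vec}(M)$ together with the singular values of the Kronecker product, whereas you use submultiplicativity of the Frobenius norm against the spectral norm plus the inverse trick for the lower bound --- but this is a cosmetic variation, not a different argument.
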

\begin{proof}
    We calculate directly
    \begin{equation}
        \begin{aligned}
            \frac{\Vert \N(gp) - M(gp) \Vert}{\Vert M(gp)\Vert} &= \frac{\Vert g^{[d]}(\N(p) -f(p) )g^{[d]^T} \Vert}{\Vert g^{[d]}f(p)g^{[d]^T}\Vert}\\
            &= \frac{\Vert (g^{[d]} \otimes g^{[d]})\mathrm{vec}(\N(p) - f(p) ) \Vert}{\Vert (g^{[d]}\otimes g^{[d]})\mathrm{vec}(f(p))\Vert}\\
            &\leq  \frac{\sigma_\mathrm{max}(g^{[d]})^2\Vert \mathrm{vec}(\N(p) - f(p) ) \Vert }{\sigma_\mathrm{min}(g^{[d]})^2\Vert \mathrm{vec}( f(p) ) \Vert}\\
            &\leq \kappa^2 \epsilon_2.
        \end{aligned}
    \end{equation}
    The lower bound is shown by taking the minimum singular value in the numerator and maximum in the denominator.
\end{proof}

\end{document}